\def\isarxiv{1}
\pgfplotsset{compat=1.8}
\tikzset{elegant/.style={smooth,thick,samples=500,magenta}}
\theoremstyle{plain}
\newtheorem{theorem}{Theorem}[section]
\newtheorem{lemma}[theorem]{Lemma}
\newtheorem{remark}[theorem]{Remark}
\newtheorem{corollary}[theorem]{Corollary}
\theoremstyle{definition}
\newtheorem{definition}[theorem]{Definition}
\newtheorem{example}[theorem]{Example}
\newtheorem{problem}[theorem]{Problem}
\crefname{assumption}{Assumption}{Assumptions}
\theoremstyle{plain}
\newtheorem*{thm*}{Theorem}
\theoremstyle{plain}
\newcommand{\TV}{\texttt{TV}}
\newcommand{\cP}{\mathcal{P}}
\newcommand{\F}{\mathcal{F}}
\newcommand{\Binom}{\mathrm{Binom}}
\newcommand{\supp}{\mathrm{supp}}
\newcommand{\ump}{\mathrm{ump}}
\newcommand{\minmax}{\mathrm{minmax}}
\newcommand{\R}{\mathbb{R}}
\newcommand{\E}{\mathbb{E}}
\newcommand{\PP}{\mathbb{P}}
\newcommand{\cS}{\mathcal{S}}
\newcommand{\wt}{\widetilde}
\definecolor{b2}{RGB}{51,153,255}
\definecolor{mygreen}{RGB}{80,180,0}
\title{Towards Optimal Statistical Watermarking}
\author{
}
\date{}
\author{%
Baihe Huang \\ University of California, Berkeley\\{
  \texttt{baihe\_huang@berkeley.edu} }
  \And
  Banghua Zhu \\ University of California, Berkeley\\{
  \texttt{banghua@berkeley.edu}}
  \And
  Hanlin Zhu \\ University of California, Berkeley\\{
  \texttt{hanlinzhu@berkeley.edu}}
  \And
  Jason D. Lee \\ Princeton University\\{
  \texttt{jasonlee@princeton.edu } }
  \And
  Jiantao Jiao \\ University of California, Berkeley\\{
  \texttt{jiantao@eecs.berkeley.edu } }
  \And
  Michael I. Jordan \\ University of California, Berkeley\\{
  \texttt{jordan@cs.berkeley.edu} }
}
\begin{document}

\setlength{\abovedisplayskip}{3.2pt}
\setlength{\belowdisplayskip}{3.2pt}

\ifdefined\isarxiv
\author{
Baihe Huang\thanks{
  \texttt{baihe\_huang@berkeley.edu}. University of California, Berkeley. }
  \and
  Hanlin Zhu\thanks{
  \texttt{hanlinzhu@berkeley.edu}. University of California, Berkeley. }
  \and
  Banghua Zhu\thanks{
  \texttt{banghua@berkeley.edu}. University of California, Berkeley. }
  \and
  Kannan Ramchandran
  \thanks{
  \texttt{kannanr@berkeley.edu}. University of California, Berkeley. }
  \and
  Michael I. Jordan\thanks{
  \texttt{jordan@cs.berkeley.edu}. University of California, Berkeley. }
  \and
  Jason D. Lee\thanks{
  \texttt{jasonlee@princeton.edu}. Princeton University. }
  \and
  Jiantao Jiao\thanks{
  \texttt{jiantao@eecs.berkeley.edu}. University of California, Berkeley. }
}
  \maketitle
  \begin{abstract}
  We study statistical watermarking by formulating it as a hypothesis testing problem, a general framework which subsumes all previous statistical watermarking methods. Key to our formulation is a coupling of the output tokens and the rejection region, realized by pseudo-random generators in practice, that allows non-trivial trade-offs between the Type I error and Type II error. We characterize the Uniformly Most Powerful (UMP) watermark in the general hypothesis testing setting and the minimax Type II error in the model-agnostic setting. In the common scenario where the output is a sequence of $n$ tokens, we establish nearly matching upper and lower bounds on the number of i.i.d. tokens required to guarantee small Type I and Type II errors. Our rate of $\Theta(h^{-1} \log (1/h))$ with respect to the average entropy per token $h$ highlights potentials for improvement from the rate of $h^{-2}$ in the previous works. Moreover, we formulate the robust watermarking problem where the user is allowed to perform a class of perturbations on the generated texts, and characterize the optimal Type II error of robust UMP tests via a linear programming problem. To the best of our knowledge, this is the first systematic statistical treatment on the watermarking problem with near-optimal rates in the i.i.d. setting, which might be of interest for future works.
  \end{abstract}

\else

\twocolumn[
\icmltitle{Towards Optimal Statistical Watermarking}



\icmlsetsymbol{equal}{*}

\begin{icmlauthorlist}
\icmlauthor{Firstname1 Lastname1}{equal,yyy}
\icmlauthor{Firstname2 Lastname2}{equal,yyy,comp}
\icmlauthor{Firstname3 Lastname3}{comp}
\icmlauthor{Firstname4 Lastname4}{sch}
\icmlauthor{Firstname5 Lastname5}{yyy}
\icmlauthor{Firstname6 Lastname6}{sch,yyy,comp}
\icmlauthor{Firstname7 Lastname7}{comp}
\icmlauthor{Firstname8 Lastname8}{sch}
\icmlauthor{Firstname8 Lastname8}{yyy,comp}
\end{icmlauthorlist}

\icmlaffiliation{yyy}{Department of XXX, University of YYY, Location, Country}
\icmlaffiliation{comp}{Company Name, Location, Country}
\icmlaffiliation{sch}{School of ZZZ, Institute of WWW, Location, Country}

\icmlcorrespondingauthor{Firstname1 Lastname1}{first1.last1@xxx.edu}
\icmlcorrespondingauthor{Firstname2 Lastname2}{first2.last2@www.uk}

\icmlkeywords{Machine Learning, ICML}

\vskip 0.3in
]



\printAffiliationsAndNotice{\icmlEqualContribution} 

\begin{abstract}
We study statistical watermarking by formulating it as a hypothesis testing problem, a general framework which subsumes all previous statistical watermarking methods. Key to our formulation is a coupling of the output tokens and the rejection region, realized by pseudo-random generators in practice, that allows non-trivial trade-offs between the Type I error and Type II error. We characterize the Uniformly Most Powerful (UMP) watermark in the general hypothesis testing setting and the minimax Type II error in the model-agnostic setting. In the common scenario where the output is a sequence of $n$ tokens, we establish nearly matching upper and lower bounds on the number of i.i.d. tokens required to guarantee small Type I and Type II errors. Our rate of $\Theta(h^{-1} \log (1/h))$ with respect to the average entropy per token $h$ highlights a fundamental gap between the rate of $O(h^{-2})$ in the previous works. 
Moreover, we formulate the robust watermarking problem where the user is allowed to perform a class of perturbation on the generated texts, and characterize the optimal type II error of robust UMP tests via a linear programming problem. To the best of our knowledge, this is the first systematic statistical treatment on the watermarking problem with near-optimal rates in the i.i.d. setting, and might be of interest for future works.
\end{abstract}

\fi

\section{Introduction}

The prevalence of large language models (LLMs) in recent years makes it challenging and important to detect whether a human-like text is produced by the LLM system~\citep{kirchenbauer2023watermark,kuditipudi2023robust,christ2023undetectable, yoo2023advancing,fernandez2023three,fu2023watermarking, wang2023towards,yang2023towards,liu2023private,zhao2023provable,koike2023outfox}. 
On the one hand, some of the most advanced LLMs to date, such as GPT-4~\citep{openai2023gpt4}, are good at producing human-like texts, which might be hard to distinguish from human-generated texts even for humans in various scenarios. On the other hand, it is important to keep human-produced text datasets separated from machine-produced texts in order to avoid the spread of misleading information~\citep{vincent2022ai} and the contamination of training datasets for future language models~\citep{kuditipudi2023robust}. 

To detect machine-generated content, a recent line of work~\citep{kirchenbauer2023watermark,kuditipudi2023robust,christ2023undetectable} proposes to inject \emph{statistical watermarks}, a signal embedded within the generated texts which reveals the generation source, into texts. As discussed in \citet{kuditipudi2023robust}, there are three desirable properties of watermarking: 1. distortion-free: the watermark should not alert the distribution of the generated texts; 2. agnostic: the detector should not 
know the language model or the prompt; 3. robust: the detector should be able to detect the watermark even under slight perturbation of the generated texts. 
However, previously proposed methods are either heuristic or guaranteed by different, sub-optimal mathematical descriptions of the above properties, making it difficult to systematically evaluate the watermarking schemes and to draw useful statistical conclusions. 

Motivated by this, we propose a unifying formulation of statistical watermarking based on hypothesis testing, and study the trade-off between the Type I error and the Type II error. 
More specifically, our contributions are summarized as follows:
\begin{itemize}
    \item We formulate statistical watermarking as a hypothesis testing problem with a random rejection region, and specify model-agnostic watermarking, where the distribution of the rejection region is independent of the underlying model distribution, as a notion highly practical in real-world applications.
    \item 
    We find the optimal Type II error among all level-$\alpha$ tests and explicitly characterize the most powerful watermarking scheme that achieves it. 
    For model-agnostic watermarking, we construct the optimal distribution of the reject region and establish the minimax increase in Type II error in comparison to the most powerful watermarking schemes.
    \item In the context where the sample is a sequence of many i.i.d. tokens, we provide nearly-matching upper and lower bounds for the minimum number of tokens required to guarantee small type I and type II errors. Our rate $h^{-1}\log (1/h)$ improves upon previous works featuring a rate of $h^{-2}$, in terms of $h$ --- the average entropy of per generated tokens.
    \item Additionally, we formulate a robust watermarking problem where the watermarking scheme is robust to a class of perturbations that the user can employ to the outputs. In this setting, we also characterize the optimal type II error and the construction of the robust watermarking scheme via a linear program.
\end{itemize}


\subsection{Related works}
\label{sec:related_work}
\ifdefined\isarxiv
Watermarking is a powerful white-box method for detecting LLM-generated texts~\citep{tang2023science}. 
Watermarks can be injected either into a pre-existing text (edit-based watermarks) or during the text generation (generative watermarks). Our work falls in the latter category. Edit-based watermarking~\citep{rizzo2019fine,abdelnabi2021adversarial,yang2022tracing,kamaruddin2018review} has been the focus of several studies in the past. The concept of generative watermarking dates back to the work of~\citet{
venugopal-etal-2011-watermarking}, while our work is more relevant to a recent line of works~\citep{aaronson2022my,kirchenbauer2023watermark,kuditipudi2023robust,christ2023undetectable} that introduce statistical signals into text generation. Specifically, \citet{kirchenbauer2023watermark} increases the probability that tokens are chosen from a randomly sampled `green' list; \citet{aaronson2022my} selects the token $i$ that maximizes keys randomly sampled from exponential distributions with mean $1/p_i$; \citet{christ2023undetectable} samples the tokens by solving the optimal transport from uniform distribution in $[0,1]$; \citet{kuditipudi2023robust} introduces inverse transform sampling as a distortion-free watermarking method; \citet{zhao2023provable}  proposes a simplified variation of \citet{kirchenbauer2023watermark} where a fixed Green-Red split is used consistently. These watermarks are evaluated in the benchmark of~\citet{piet2023mark}.

Statistical watermarking techniques share the similarity that the outputs are correlated with some secret keys (which could come from either external randomness or internal hashing), thereby coupling the rejection region and the outputs in the hypothesis testing. This fact is recognized by recent works of~\citet{kuditipudi2023robust,zhao2023provable}, where model-agnosticism in the detection phase is also emphasized. The exponential scheme in~\citet{aaronson2022watermarking}, the inverse transform sampling scheme in~\citet{kuditipudi2023robust}, and the binary scheme in~\citet{christ2023undetectable} come with theoretical guarantees that (i) the watermarked model distribution cannot be distinguished from the original distribution (called undetectability~\citep{christ2023undetectable} or distortion-freeness~\citep{kuditipudi2023robust}), and (ii) the outputs from the watermarked models are statistically detectable as long as the entropy is lower bounded. In contrast, \citet{kirchenbauer2023watermark} is not distortion-free, nonetheless enjoying little degradation in generation quality and provable detectability~\citep{zhao2023provable} with suitable parameter choice of the bias parameter (logits increase $\delta$ in the `green' list). Despite the aforementioned theoretical efforts in establishing guarantees for existing watermarks, the fundamental tradeoff in this hypothesis testing problem and the rates on the required number of generated tokens remain unsolved.  

Watermarks can also be injected with private forgeability and public verifiability~\citep{fairoze2023publicly}, hence functioning effectively as digital signatures.  Meanwhile, various attack algorithms against watermarking schemes were also studied \citep{kirchenbauer2023watermark,kirchenbauer2023reliability,sato2023embarrassingly,zhang2023watermarks,kuditipudi2023robust}. These attacking schemes apply quality-preserving perturbations to the watermarked outputs in delicate ways, and are therefore modelled by the perturbation graph (Definition~\ref{def:perturb_graph}) in the robust watermark framework in Section~\ref{sec:robust}. With the success of various attacking methods, robustness becomes an important consideration in watermarking techniques. However, \citet{zhang2023watermarks} proves that it is only feasible to achieve robustness to a well-specified set of attacks, instead of all. This fact aligns with our Theorem~\ref{thm:robust_rates}, which characterizes the fundamental limits of robust watermarking under different attacking powers. 
\else
Watermarking is a powerful white-box method for detecting LLM-generated texts~\citep{tang2023science}. 
Watermarks can be injected either into a pre-existing text (edit-based watermarks) or during the text generation (generative watermarks). Our work falls in the latter category. Edit-based watermarking~\citep{rizzo2019fine,abdelnabi2021adversarial,yang2022tracing,kamaruddin2018review} has been the focus of several studies in the past. The concept of generative watermarking dates back to the work of~\citet{
venugopal-etal-2011-watermarking}, while our work is more relevant to a recent line of works~\citep{aaronson2022my,kirchenbauer2023watermark,kuditipudi2023robust,christ2023undetectable,zhao2023provable} that introduce statistical signals into text generation. Specifically, \citet{kirchenbauer2023watermark} increases the probability that tokens are chosen from a randomly sampled `green' list; \citet{aaronson2022my} selects the token $i$ that maximizes keys randomly sampled from exponential distributions with mean $1/p_i$; \citet{christ2023undetectable} samples the tokens by solving the optimal transport from uniform distribution in $[0,1]$; \citet{kuditipudi2023robust} introduces inverse transform sampling as a distortion-free watermarking method; \citet{zhao2023provable}  proposes a simplified variation of \citet{kirchenbauer2023watermark} where a fixed Green-Red split is used consistently. These watermarks are evaluated in the benchmark of~\citet{piet2023mark}. More discussions on related works are deferred to Appendix~\ref{sec:add_related_works}.

\fi
\subsection{Notation}

Define $(x)_+:= \max\{x,0\}$, $x \wedge y := \min\{x,y\}, x \vee y = \max\{x,y\}$. For any set $A$, we write $A^c$ as the complement of set $A$, $|A|$ as its cardinality, and $2^A:= \{B: B \subset A\}$ as the power set of $A$. We use notations $g(n) = O\left(f(n)\right)$, $g(n) = \Omega\left(f(n)\right)$, and $g(n) = \Theta\left(f(n)\right)$ to denote that there exists numerical constants $C_1,c_2,C_3,c_4$ such that for all $n > 0$: $g(n) \leq C_1 \cdot f(n)$, $g(n) \geq c_2 \cdot f(n)$, and $c_4 \cdot f(n) \leq g(n) \leq C_3 \cdot f(n)$, respectively. 
Throughout, we use $\ln$ to denote natural logarithm. 

The total variation (TV) distance between two probability measures $\mu,\nu$ is denoted by $\TV(\mu\|\nu)$. We use $\supp(\mu)$ to denote the support of a probability measure $\rho$. Given a sample space $\Omega$, let $\Delta(\Omega)$ denote the set of all probability measures over $\Omega$ (take the discrete $\sigma$-algebra). We write $\delta_x$ as the Dirac measure on $x$, i.e., $\delta_x(A) = \begin{cases}
    1, &~ x \in A\\ 0, &~ x \notin A
\end{cases}$. A coupling for two distributions (i.e. probability measures) is a joint distribution of them.
\section{Watermarking as a Hypothesis Testing Problem}

In the problem of statistical watermarking, a service provider (e.g., a language model system), who possesses a distribution $\rho$ over a sample space $\Omega$, aims to make the samples from the service provider distinguishable by a detector, without changing $\rho$. The service provider achieves this by sharing a watermark key (generated from a distribution that is \emph{coupled with} $\rho$) with the detector, with the goal of controlling both the Type I error (an independent output is falsely detected as from $\rho$) and the Type II error (an output from $\rho$ fails to be detected). This random key together with the detection rule constitute a (random) rejection region. In the following, we formulate this problem as hypothesis testing with random rejection regions.

\begin{problem}[Watermarking]\label{prob:crypto_stat_watermark}
Fix $\epsilon \geq 0$. 
Given a probability measure $\rho$ over sample space $\Omega$\footnote{Throughout we will assume that $\Omega$ is discrete, as in most applications.}, an $\epsilon$-distorted watermarking scheme of $\rho$ is a probability measure $\cP$ (a joint probability of the output $X$ and the rejection region $R$) over the sample space $\Omega \otimes 2^{\Omega}$ such that $\TV(\cP(\cdot,2^{\Omega})\|\rho) \leq \epsilon$, where $\cP(\cdot,2^{\Omega})$ is the marginal probability of $X$ over $\Omega$. In the generation phase, the service provider samples $(X,R)$ from $\cP$, provides the output $X$ to the service user, and sends the rejection region $R$ to the detector.

In the detection phase, a detector is given a tuple $(X,R) \in \Omega \otimes 2^{\Omega}$ where $X$ is sampled from an unknown distribution and $R$, given by the service provider, is sampled from the marginal probability $\cP(\Omega,\cdot)$ over $2^\Omega$. The detector is tasked with using $R$ to conduct a hypothesis test that involves two competing hypotheses:
\begin{align*}
    H_0: &~ X \text{ is sampled independently from } R,\\
    \textit{versus}~~H_1: &~ (X,R) \text{ is sampled from the joint distribution } \cP.
\end{align*}
The \emph{Type I error of $\cP$}, defined as $\alpha(\cP) := \sup_{\pi \in \Delta(\Omega)}\PP_{Y \sim \pi, (X,R) \sim \cP}(Y \in R)$, is the maximum probability that an independent sample $Y$ is falsely rejected. The \emph{Type II error of $\cP$}, defined as $\beta(\cP) := \PP_{(X,R) \sim \cP}(X \notin R)$, is the probability that the sample $(X,R)$ from the joint probability $\cP$ is not detected.
\end{problem}

\ifdefined\isarxiv
\begin{wrapfigure}{r}{0.6\textwidth}
    \begin{center}
    \includegraphics[width=0.6\textwidth]{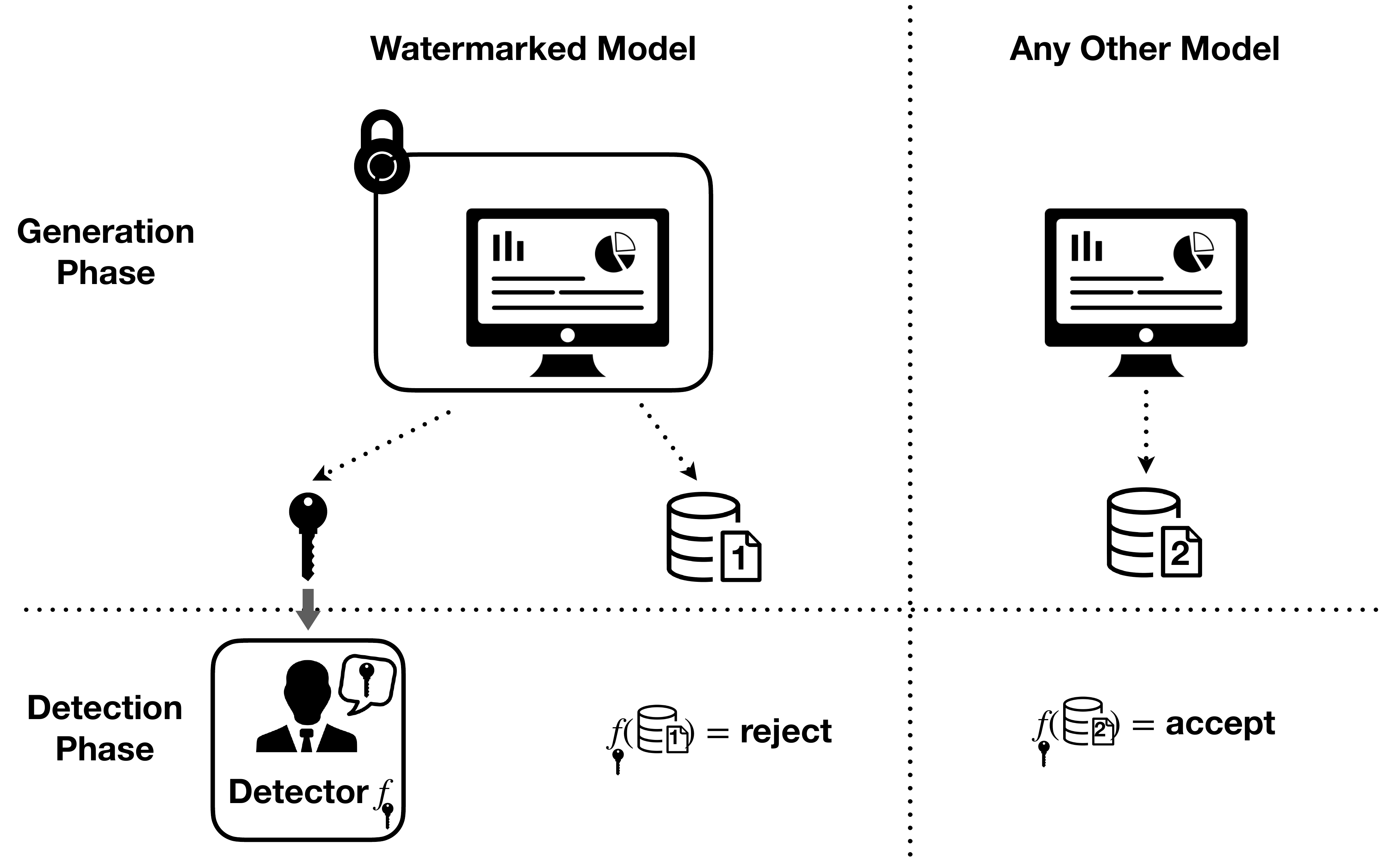}
  \end{center}
    \caption{Illustration of watermarking in practice.}
    \label{fig:watermark}
\end{wrapfigure}
\else
\begin{figure}
    \begin{center}
    \includegraphics[width=0.4\textwidth]{figs/watermark.pdf}
  \end{center}
    \caption{Illustration of watermarking in practice.}
    \label{fig:watermark}
\end{figure}
\fi

A few remarks are in order.

\begin{remark}[Difference between classical hypothesis testing]
In classical hypothesis testing, the rejection region is often nonrandomized or independent from the test statistics. However, in watermarking problem, the service provider has the incentive to facilitate the detection. The key insight is that $\cP$ is a coupling of the random output $X$ and the random rejection region $R$, so that $X \in R$ occurs with a high probability (low Type II error), while any independent sample $Y$ lies in $R$ with a low probability (low Type I error). 
\end{remark}

\begin{remark}[Implementation]\label{rmk:implementation}
In fact, it is imperative for the detector to observe the rejection region that is coupled with the output: otherwise, the output from the service provider and another independent output from the same marginal distribution would be statistically indistinguishable.

In practice, the process of coupling and sending the rejection region can be implemented by cryptographical techniques: the service provider could hash a secret key $\texttt{sk}$, and use pseudo-random functions $F_1,F_2$ to generate $(X,R) = (F_1(\texttt{sk}),F_2(\texttt{sk}))$. Now it suffices to send the secret key to the detector, who can then reproduce the reject region using the pseudo-random function $F_2$. This process is illustrated in Figure~\ref{fig:watermark}. 

By introducing the coupled and random rejection region, we abstract away the minutiae of cryptographical implementations, therefore allowing us to focus solely on the statistical trade-offs.
\end{remark}

For practical applications, it is additionally desirable for watermarking schemes to be model-agnostic, i.e, the marginal distribution of the rejection region is irrelevant to the watermarked distribution. Recall from Remark~\ref{rmk:implementation} that in practice, detectors usually adopt a pseudo-random function to generate the reject region from the shared secret keys. If the watermarking scheme $\cP$ depends on the underlying distribution $\rho$, then the pseudo-random function, and effectively the detector, need to know $\rho$. On the other hand, model-agnostic watermarking enables the detector to use a fixed, pre-determined pseudo-random function to generate the reject region, and hence perform hypothesis-testing \emph{without the knowledge of the underlying model that generates the output}. This is an important property enjoyed by existing watermarks~\citep{aaronson2022watermarking,kirchenbauer2023watermark,christ2023undetectable,kuditipudi2023robust}. Therefore in the following, we formulate model-agnostic within our hypothesis testing framework.

\begin{problem}[Model-Agnostic Watermarking]\label{prob:agnostic_watermark}
Given a sample space $\Omega$ and a set $\mathcal{Q} \subset  \Delta(\Omega)$, a $\mathcal{Q}$-watermarking scheme is a tuple $(\eta, \{\cP_\rho\}_{\rho \in \mathcal{Q}})$ where $\eta$ is a probability measure over $2^\Omega$, such that for any probability measure $\rho \in \mathcal{Q}$, $\cP_\rho$ is a distortion-free watermarking scheme of $\rho$ and its marginal distribution over $2^\Omega$, $\cP_\rho(\Omega,\cdot)$, equals $\eta(\cdot)$.

A model-agnostic watermarking scheme is a $\Delta(\Omega)$-watermarking scheme. 
\end{problem}

\begin{remark}[Information of the model]
\label{rmk:info-reveal}
A $\mathcal{Q}$-watermarking scheme can be interpreted as a way to watermark all distributions in the set $\mathcal{Q}$ while revealing no information of the model used to generate the output other than the membership inside $\mathcal{Q}$ (i.e., observing the rejection region, one is only able to infer that the output comes from a model in $\mathcal{Q}$, but is unable to know which exactly the model is). By letting $\mathcal{Q}$ be $\Delta(\Omega)$, model-agnostic watermarking thus reveals no information of the model.
\end{remark}

\subsection{Examples}
\label{sec:examples}

In the following examples, we show how existing watermarking schemes fit in our framework.

\begin{example}[Text Generation with Soft Red List, \citet{kirchenbauer2023watermark}]\label{ex:red-list}
In Algorithm 2 of \citet{kirchenbauer2023watermark}, the watermarking scheme (over sample space $\Omega = V^*$ where $V$ is the `vocabulary', i.e., the set of all tokens) of $\rho$ is given as follows:
\begin{itemize}
    \item Fix threshold $C \in \R$, green list size $\gamma \in (0,1)$, and hardness parameter $\delta > 0$
    \item For $i = 1,2,\dots$
    \begin{itemize}
        \item Randomly partition $V$ into a green list $G$ of size $\gamma |V|$, and a red list $R$ of size $(1 - \gamma)|V|$.
    \item Sample the token $X_i$ from the following distribution from $\PP$ where $\PP(X_i = x) = $
    \begin{align*}
        \begin{cases}
            \frac{\rho(x) \cdot \exp(\delta)}{\sum_{x \in G}\rho(x) \cdot \exp(\delta) + \sum_{x \in R}\rho(x)}, &~ \text{if}~ x \in G\\
            \frac{\rho(x) }{\sum_{x \in G}\rho(x) \cdot \exp(\delta) + \sum_{x \in R}\rho(x)}, &~ \text{if}~ x \in R
        \end{cases}
    \end{align*}
    \end{itemize}
    \item Let the rejection region $R$ be
    \begin{align*}
        \left\{ X \in \Omega \  : \ \text{the number of green list tokens in X} \geq C\right\}.
    \end{align*}
    
\end{itemize}
The above sampling procedures as a whole define the joint distribution of the output $X = X_1X_2\cdots$ and the rejection region $R$, i.e., the $\Theta(\delta)$-distorted watermarking scheme $\cP_{\textsc{SoftRedList}}$. The detector observes the rejection region via the secret key that the service provider uses to generate the green and red lists.
\end{example}

\begin{example}[Complete watermarking algorithm $\mathrm{Wak}_{\mathrm{sk}}$, \citet{christ2023undetectable}]\label{ex:undetectable}
In Algorithm 3 of \citet{christ2023undetectable}, the watermarking scheme (over sample space $\Omega = \{0,1\}^*$) of $\rho$ is given as follows:
\begin{itemize}
    \item Fix threshold $C \in \R$ and entropy threshold $\lambda > 0$
    \item Select $i$ such that the empirical entropy of $X_1X_2\dots X_i$ is greater than or equal to $\lambda$ 
    \item For $j = i+1,i+2,\dots$
    \begin{itemize}
        \item Sample $u_j\in [0,1]$ uniformly at random.
    \item Let the binary token $X_j$ be given by $X_j = \begin{cases}
            1, &~ \text{if}~ u_j \leq \rho(1|X_1,\dots,X_{j-1})\\
            0, &~ \text{otherwise}
        \end{cases}$.
    \end{itemize}
    \item Let the rejection region $R$ be given by
    \begin{align*}
        \left\{ X: \sum_{j=i+1}^L \log \frac{1}{X_j u_j + (1-X_j)(1-u_j)} \geq C\right\}.
    \end{align*}
\end{itemize}
The above sampling procedures as a whole define the joint distribution of the output $X = X_1X_2\cdots$ and the rejection region $R$, i.e., the $0$-distorted watermarking scheme $\cP_{\mathrm{Wak}_{\mathrm{sk}}}$.The detector observes the rejection region via the index $i$ and $u_j (j > i)$.
\end{example}

\begin{example}[Inverse transform sampling $\mathrm{Wak}_{\mathrm{ITS}}$, \citet{kuditipudi2023robust}]\label{ex:inverse}
The inverse transform sampling scheme in \citet{christ2023undetectable} (over sample space $\Omega = [N]^*$) of $\rho$ is given as follows:
\begin{itemize}
    \item Fix threshold $C \in \R$, resample size $T$, and block size $k$
    \item For $j = 1,2,\dots, $
    \begin{itemize}
        \item Let $\mu \leftarrow \rho(\cdot|X_1,\dots,X_{j-1})$.
        \item Sample $\xi_j = (u_j,\pi_j), \xi_j^{(t)} = (u_j',\pi_j')$ ($t = 1,\dots,T$) i.i.d. according to the following distribution:
        \begin{itemize}
            \item Sample $u \in [0,1]$ uniformly at random;
            \item Sample $\pi$ uniformly at random from the space of permutations over the vocabulary $[N]$.
        \end{itemize}
    \item Let the token $X_j$ be given by 
    $$\pi^{-1}\left(\min\{\pi(i): \mu(\{j: \pi(j) \leq \pi(i) \}) \geq u\}\right).$$
    \end{itemize}
    \item Let the rejection region $R$ be
    \begin{align*}
        R = \left\{ X: \frac{1+\sum_{t=1}^{T} \mathbbm{1}\left(\phi(X,\xi^{(t)}) \leq \phi(X,\xi) \right)}{T+1}\leq C\right\}
    \end{align*}
    where $\xi = (\xi_1,\dots,\xi_{\mathrm{len}(X)})$, $\xi^{(t)} = (\xi_1^{(t)},\dots,\xi_{\mathrm{len}(X)}^{(t)})$, and $\phi(y,\xi)$ is given by
    \begin{align*}
        \underset{\substack{i = 1,\dots,\mathrm{len}(y)-k+1, \\j = 1,\dots,\mathrm{len}(\xi)}}{\min}\left\{d\left(\{y_{i+l}\}_{l=1}^{k-1},\{\xi_{(j+l)\% \mathrm{len}(\xi)}\}_{l=1}^{k-1}\right)\right\}
    \end{align*}
\end{itemize}
Here $d$ is an alignment cost, set as $d(y,(u,\pi)) = \sum_{i=1}^{\mathrm{len}(y)} \left|u_i - \frac{\pi_i(y_i)-1}{N-1}\right|$ 
in \citet{kuditipudi2023robust}. Additionally, a single permutation $\pi ~(\forall j,t)$ is used to reduce computation overhead. 
The above sampling procedures as a whole define the joint distribution of the output $X = X_1X_2\cdots$ and the rejection region $R$ in $\mathrm{Wak}_{\mathrm{ITS}}$.The detector observes the rejection region via $\xi,\xi'$.
\end{example}

Using similar approaches as in the above examples, we can encompass the methods of a number of works~\citep{aaronson2022watermarking,liu2023private,zhao2023provable,kuditipudi2023robust} into our framework.

\section{Statistical Limit in Watermarking}

\subsection{Rates under the general setting of Problem~\ref{prob:crypto_stat_watermark}}

Given the formulation of statistical watermarking, it is demanding to understand its statistical limit. In this section, we study the following notion of Uniformly Most Powerful (UMP) test, i.e., the watermarking scheme that achieves the minimum achievable Type II error among all possible tests with Type I error $\leq \alpha$. 

\begin{definition}[Uniformly Most Powerful Watermark]
A watermarking scheme $\cP$ is called \emph{Uniformly Most Powerful (UMP) $\epsilon$-distorted watermark of level $\alpha$}, if it achieves the minimum achievable Type II error among all $\epsilon$-distorted watermarking with Type I error $\leq \alpha$. 
\end{definition}

The following result gives an exact characterization of the UMP watermark and its Type II error.

\begin{theorem}\label{thm:crypto_stat_rates}
For probability measure $\rho$, the Uniformly Most Powerful $\epsilon$-distorted watermark of level $\alpha$, denoted by $\cP^*$, is given by
\begin{align*}
    \cP^*(X = x, R = R_0) = \begin{cases}
    \rho^*(x) \cdot \left(1 \wedge \frac{\alpha}{\rho^*(x)}\right), &~ R_0 = \{x\}\\
    \rho^*(x) \cdot \left(1- \frac{\alpha}{\rho^*(x)}\right)_+, &~ R_0 = \emptyset\\
    0, &~ \text{else}
\end{cases}
\end{align*}
where $\rho^* = \arg \min_{\TV(\rho'\|\rho) \leq \epsilon} \sum_{x \in \Omega: \rho'(x) > \alpha} \left(\rho'(x) - \alpha\right).$ 
Its Type II error is given by 
$$\min_{\TV(\rho'\|\rho) \leq \epsilon} \sum_{x \in \Omega: \rho'(x) > \alpha} \left(\rho'(x) - \alpha\right),$$
and when $|\Omega| \geq \frac{1}{\alpha}$ it simplifies to
\begin{align}\label{eq:optimal_type_2_err}
    \left(\sum_{x \in \Omega: \rho(x) > \alpha} \left(\rho(x) - \alpha\right) - \epsilon\right)_+.
\end{align}
\end{theorem}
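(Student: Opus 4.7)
The plan is to decouple Problem~\ref{prob:crypto_stat_watermark} into an inner coupling optimization with the marginal of $X$ fixed, and an outer optimization over that marginal. First I would perform a structural reduction showing that any watermark may be replaced by one whose rejection region $R$ takes values in $\{\emptyset\} \cup \bigl\{\{x\} : x \in \Omega\bigr\}$. Given $\cP$, define $\tilde \cP$ as the law of $(X, R \cap \{X\})$: the marginal of $X$ is unchanged, the Type II error satisfies $\PP_{\tilde\cP}(X \notin \tilde R) = \PP_{\cP}(X \notin R)$, while for every $y \in \Omega$ the rejection probability satisfies $\PP_{\tilde\cP}(y \in \tilde R) = \PP_{\cP}(X = y, R \ni y) \leq \PP_{\cP}(y \in R)$, so the Type I error (which by linearity of $\PP_{Y \sim \pi, R}(Y \in R)$ in $\pi$ is achieved at a Dirac $\pi = \delta_y$) can only decrease.

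Next, I would fix the marginal $\rho'(x) := \cP(X = x, R \in 2^{\Omega})$ with $\TV(\rho'\|\rho) \leq \epsilon$, and parametrize the reduced scheme by $p(x) := \cP(X = x, R = \{x\})$, so that $\cP(X = x, R = \emptyset) = \rho'(x) - p(x)$. The Type I constraint becomes $p(y) \leq \alpha$ for every $y$, the compatibility with the marginal requires $p(x) \leq \rho'(x)$, and the Type II error equals $1 - \sum_x p(x)$. Maximizing $\sum_x p(x)$ subject to these two pointwise upper bounds forces $p(x) = \rho'(x) \wedge \alpha$, which is exactly the formula claimed for $\cP^*$, and yields the inner-optimal Type II error $\sum_x (\rho'(x) - \alpha)_+$. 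Minimizing over $\rho'$ then immediately gives both the definition of $\rho^*$ and the first expression for the Type II error.

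For the closed-form simplification when $|\Omega| \geq 1/\alpha$, let $f(\mu) := \sum_x (\mu(x) - \alpha)_+$. For the lower bound on $f(\rho^*)$ I would use the pointwise inequality $(\rho(x) - \alpha)_+ - (\rho'(x) - \alpha)_+ \leq (\rho(x) - \rho'(x))_+$ (verified by splitting on whether each of $\rho(x), \rho'(x)$ exceeds $\alpha$); summing gives $f(\rho) - f(\rho') \leq \sum_x (\rho(x) - \rho'(x))_+ = \TV(\rho'\|\rho) \leq \epsilon$, hence $f(\rho') \geq (f(\rho) - \epsilon)_+$. For the matching upper bound I would construct $\rho^*$ explicitly by transporting mass of total amount $\min(\epsilon, f(\rho))$ from $\{x : \rho(x) > \alpha\}$ to $\{x : \rho(x) \leq \alpha\}$, capping each destination at $\alpha$; this costs exactly that amount in TV and decreases $f$ by the same amount.

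The main obstacle is the feasibility of the redistribution: the low-mass side must have enough room to absorb the transported mass without any coordinate exceeding $\alpha$. This follows from the identity $\sum_x (\alpha - \rho(x))_+ - \sum_x (\rho(x) - \alpha)_+ = \sum_x (\alpha - \rho(x)) = |\Omega|\alpha - 1$, so the available capacity on low-mass elements is $f(\rho) + |\Omega|\alpha - 1 \geq f(\rho) \geq \min(\epsilon, f(\rho))$ precisely under the hypothesis $|\Omega| \geq 1/\alpha$. With feasibility in hand, the construction realizes $f(\rho^*) = (f(\rho) - \epsilon)_+$, matching the lower bound and completing the proof.
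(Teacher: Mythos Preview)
Your proof is correct and follows essentially the same approach as the paper: both reduce to the pointwise constraints $p(x)\le \alpha\wedge\rho'(x)$ via the Type~I bound at Dirac measures $\pi=\delta_y$, then optimize over the marginal $\rho'$. Your explicit reduction $R\mapsto R\cap\{X\}$ is a clean repackaging of the paper's direct bound $A(y)\le\alpha$, and you supply more detail on the simplification $\min_{\rho'} f(\rho') = (f(\rho)-\epsilon)_+$ (both the $1$-Lipschitz lower bound and the mass-transport construction with its capacity check) than the paper, which simply asserts equality under the capacity condition.
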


The key insight for proving \Cref{thm:crypto_stat_rates} is that maximizing Type II error over level $\alpha$ can be written as a linear program over the coupling distribution $\cP$. The detailed proof is deferred to Appendix~\ref{sec:crypto_stat_rates_proof}. In the following, we make a few remarks on Theorem~\ref{thm:crypto_stat_rates}.

\begin{remark}[Dependence on distortion parameter $\epsilon$]
As seen from the theorem, when a larger distortion parameter $\epsilon$ is allowed, the Type II error would decrease. This aligns with the intuition that adding statistical bias would make the output easier to detect~\citep{aaronson2022my,kirchenbauer2023watermark}. Among all choices of $\epsilon$, the case $\epsilon=0$ is of particular interest since it preserves the marginal distribution of the service provider's output. Therefore, we will focus on this distortion-free case in the following sections.
\end{remark}

\begin{remark}[Intuition behind $\cP^*$]
Recall that in practice, the watermarks are implemented via pseudo-random functions. Therefore, the uniformly most powerful test in \Cref{thm:crypto_stat_rates} is effectively using a pseudo-random generator to approximate the distribution $\rho$, combined with an $\alpha$-clipping to control Type I error. 
This construction reveals a surprising message: simply using pseudo-random generator to approximate the distribution is optimal.
\end{remark}

\begin{remark}[Watermarking guarantees]
To achieve the upper bound of \Cref{thm:crypto_stat_rates}, the detector needs to access the model and the prompt in order to generate the reject region, which is not always accessible in many real-world applications. Therefore, the upper bound of \Cref{thm:crypto_stat_rates} achieves a weaker watermarking guarantee compared with previous works~\citep{aaronson2022my,kirchenbauer2023watermark,christ2023undetectable}. In Section~\ref{sec:agnostic}, we study model-agnostic watermarking that overcomes this limitation.

Nonetheless, the lower bound in \Cref{thm:crypto_stat_rates} characterizes a fundamental limit of Problem~\ref{prob:crypto_stat_watermark}, thus providing an information-theoretic lower bound for all watermarks.
\end{remark}

\begin{remark}[Implementation]
To implement the UMP watermark using a predetermined key, one may apply the key to the random seeds used in model generation, and sets the reject region to be the output with probability $1 \wedge \frac{\alpha}{\rho^*(x)}$. To implement the UMP watermark without the detector's knowledge on the secret key, one could hash the first few tokens to seed the pseudo-random function. In summary, the UMP watermark could use the same key to watermark many outputs, and the key needs not to be generated at the same time as the output itself.
\end{remark}

\begin{remark}[Use cases of the UMP watermark]
The utilization of the UMP watermark offers an efficient approach for (language model) service providers to determine if instruction-following datasets have been generated by a specific model. In the context of instruction-following datasets, both the prompt and response are explicitly provided to the detectors, enabling the UMP watermark to perform accurate watermarking and detection without extra source of information. This usage is beneficial in identifying and filtering out data points that have been comtaminated by texts generated from models like GPT-4~\citep{openai2023gpt}, thereby preserving the purity and quality of the training data.
\end{remark}

\begin{remark}[Dependence on the randomness of $\rho$]\label{rmk:randomness}
If $\rho$ is deterministic, the Type II error
$\left(\sum_{x \in \Omega: \rho(x) > \alpha} \left(\rho(x) - \alpha\right) - \epsilon\right)_+$
reduces to $1 - \alpha - \epsilon$ and shows limited practical utility of statistical watermarking. This is expected since when the service provider deterministically outputs $z$, it would be impossible to distinguish the watermark distribution with an independent output from $\delta_z$. In general, \Cref{thm:crypto_stat_rates} implies that the Type II error decreases when the randomness in $\rho$ increases, matching the reasoning in previous works~\cite{aaronson2022my,christ2023undetectable}.
\end{remark}


\subsection{Rates of model-agnostic watermarking}\label{sec:agnostic}

It is noticeable that for large $\mathcal{Q}$, a $\mathcal{Q}$-watermarking scheme can not perform as good as a watermarking specifically designed for $\rho$ for any distribution $\rho \in \mathcal{Q}$. This means that Uniformly Most Powerful $\mathcal{Q}$-Watermarking might not exist in general. To evaluate model-agnostic watermarking schemes, a natural desideratum is therefore the maximum difference between its Type II error and the Type II error of the UMP watermarking of $\rho$ over all distributions $\rho$, under fixed Type I error. Specifically, we introduce the following notion. 

\begin{definition}[Minimax most powerful model-agnostic watermark]
We say that a $\mathcal{Q}$-agnostic watermark $(\eta, \{\cP_\rho\}_{\rho \in \mathcal{Q}})$ is of level-$\alpha$ if the Type I error of $\cP_\rho$ is less than or equal to $\alpha$ for any $\rho \in \mathcal{Q}$. Define the \emph{maximum Type II error loss} of $(\eta, \{\cP_\rho\}_{\rho \in \mathcal{Q}})$ as 
\begin{align*}
    \gamma(\eta) := \sup_{\rho \in \mathcal{Q}} \beta(\cP_\rho) - \beta(\cP_\rho^\ast)
\end{align*}
where $\cP_\rho^\ast$ is the UMP distortion-free watermark of $\rho$. 

We say that a $\mathcal{Q}$-agnostic watermarking scheme is minimax most powerful, if it minimizes the maximum Type II error loss among all $\mathcal{Q}$-agnostic watermarks of level $\alpha$.
\end{definition}

The following result characterizes the Type II error loss of the minimax most powerful model-agnostic watermarking.

\begin{theorem}\label{thm:agnostic}
Let $|\Omega| = n$ and suppose $\alpha n, \frac{1}{\alpha} \in \mathbb{Z}$\footnote{For the general case, it suffices to let $a_1 = 1/(\lceil 1/\alpha \rceil)$ and $n_1 = \lceil\alpha_1 n\rceil/\alpha_1$ and augment $\Omega$ with $n_1 - n$ dummy outcomes. Then $\alpha_1 n, 1/\alpha_1 \in \mathbb{Z}$ and hence the minimax bound for the new sample space with cardinality $n_1$ and the new Type I error $\alpha_1$ yields a nearly-matching bound for $(n,\alpha)$.}. In the minimax most powerful model-agnostic watermarking scheme of level-$\alpha$, the marginal distribution of the reject region is given by
\begin{align*}
    \eta^*(A) = \begin{cases}
        \frac{1}{{n \choose \alpha n}}, &~ \text{if}~ |A| = \alpha n\\
        0, &~ \text{otherwise}
    \end{cases}.
\end{align*}
The maximum Type II error loss of the minimax most powerful model-agnostic watermarking scheme of level-$\alpha$ is given by $\gamma(\eta^\ast) = \frac{{n - \frac{1}{\alpha} \choose \alpha n}}{{n \choose \alpha n}}$. 
In the regime $\alpha \to 0_+, n \to +\infty$, we have $\gamma(\eta^\ast) \to c$ for some constant $c \leq e^{-1}$, and when $1/(\alpha n) \to 0_+$ is further satisfied, $c = e^{-1}$.
\end{theorem}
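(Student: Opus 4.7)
The plan is to reduce the problem, for each fixed marginal $\eta$, to an optimal transport problem with $0/1$ cost, then bound the minimax value from below by an averaging argument and from above by a combinatorial calculation tailored to the uniform $\eta^\ast$.

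\textbf{Reduction via LP duality.} First I would observe that once $\eta$ is fixed, minimizing $\beta(\cP_\rho)$ over couplings with marginals $(\rho,\eta)$ is a transportation problem with cost $c(x,R) = \mathbbm{1}(x \notin R)$. Kantorovich/LP duality on the finite sample space yields
\begin{align*}
\beta_{\min}(\rho,\eta) \;=\; \max_{S \subseteq \Omega}\bigl[\rho(S) - \PP_{R \sim \eta}(R \cap S \neq \emptyset)\bigr]_+ .
\end{align*}
Moreover $\alpha(\cP_\rho) = \max_x \PP_\eta(x \in R)$ depends only on $\eta$, so the level-$\alpha$ condition reduces to $\max_x \PP_\eta(x \in R) \le \alpha$, and in particular $\E_\eta|R| \le n\alpha$. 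Specialized to the free-$\eta$ UMP watermark of \Cref{thm:crypto_stat_rates}, the dual formula recovers $\beta(\cP_\rho^\ast) = \sum_x (\rho(x) - \alpha)_+$.

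\textbf{Lower bound for any level-$\alpha$ $\eta$.} I would test $\eta$ against the hard distributions $\rho_S$ that are uniform on an $m$-subset $S \subseteq \Omega$ with $m := 1/\alpha$. For such $\rho_S$, Theorem~\ref{thm:crypto_stat_rates} gives $\beta(\cP_{\rho_S}^\ast) = 0$, while $X \in S$ almost surely forces $\beta_{\min}(\rho_S,\eta) \ge \PP_\eta(R \cap S = \emptyset)$. Averaging over $S$ uniform on $m$-subsets and swapping expectations,
\begin{align*}
\E_S \PP_\eta(R \cap S = \emptyset) \;=\; \E_\eta\!\left[\binom{n-|R|}{m}/\binom{n}{m}\right].
\end{align*}
Since $k \mapsto \binom{n-k}{m}$ is convex (second difference $\binom{n-k-1}{m-2} \ge 0$) and decreasing in $k$, Jensen combined with $\E_\eta|R| \le n\alpha$ lower-bounds the right side by $\binom{n - n\alpha}{m}/\binom{n}{m} = \binom{n - m}{\alpha n}/\binom{n}{\alpha n}$, where the last equality is the symmetry $\binom{n-a}{b}/\binom{n}{b} = \binom{n-b}{a}/\binom{n}{a}$. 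Hence some $S$ witnesses $\gamma(\eta) \ge \binom{n-m}{\alpha n}/\binom{n}{\alpha n}$.

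\textbf{Matching upper bound for $\eta^\ast$.} The uniform $\eta^\ast$ satisfies $\PP_{\eta^\ast}(x \in R) = \alpha$ and is level-$\alpha$. Writing $\phi(k) := \binom{n-k}{\alpha n}/\binom{n}{\alpha n}$, the dual formula bounds the Type~II loss at $\rho$ by $\max_S [\rho(S) + \phi(|S|) - 1] - \sum_x (\rho(x) - \alpha)_+$. Decomposing $\rho(x) = \min(\rho(x),\alpha) + (\rho(x) - \alpha)_+$ and using $\sum_x (\rho(x)-\alpha)_+ \ge \sum_{x \in S} (\rho(x)-\alpha)_+$ gives $\rho(S) - \sum_x (\rho(x)-\alpha)_+ \le \min(|S|\alpha, 1)$, so the loss is at most $\max_k [\min(k\alpha, 1) + \phi(k) - 1]$. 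For $k \ge m$ this equals $\phi(k) \le \phi(m)$; for $k \le m$ it equals $\phi(k) + k\alpha - 1$, and the essential step is that $\phi(k) + k\alpha$ is nondecreasing because
\begin{align*}
\phi(k) - \phi(k+1) \;=\; \binom{n-k-1}{\alpha n - 1}/\binom{n}{\alpha n} \;\le\; \binom{n-1}{\alpha n - 1}/\binom{n}{\alpha n} \;=\; \alpha .
\end{align*}
Thus the maximum is attained at $k = m$ with value $\phi(m)$. Combining both directions, $\eta^\ast$ is minimax optimal with $\gamma(\eta^\ast) = \binom{n-1/\alpha}{\alpha n}/\binom{n}{\alpha n}$. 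The hard part will be this sharp Lipschitz-type combinatorial identity $\phi(k)-\phi(k+1) \le \alpha$: it is precisely this tight constant that allows the $\eta^\ast$-coupling to meet the averaging lower bound and hence realize the minimax.
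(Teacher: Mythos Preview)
Your proof is correct and follows the same high-level strategy as the paper: test against distributions uniform on $m = 1/\alpha$ points for the lower bound, and use the uniform $\eta^\ast$ together with a Strassen/Kantorovich-type duality for the upper bound. The differences are in the technical execution, and your versions are somewhat more elementary.

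For the lower bound, the paper writes the averaged quantity $\frac{1}{\binom{n}{m}}\sum_{i_1<\cdots<i_m}\sum_A \eta(A)\prod_j \mathbbm{1}(i_j\notin A)$ as a linear program in $\eta$, passes to the dual, exhibits explicit dual variables $\xi^\ast = \binom{n-\alpha n}{m}\frac{m}{n-m}$, $\zeta^\ast = 0$, and then uses convexity of $l\mapsto \binom{n-l}{m}+l\xi^\ast$ to conclude. Your route bypasses the LP entirely: after swapping expectations to get $\E_\eta\bigl[\binom{n-|R|}{m}\bigr]/\binom{n}{m}$, Jensen on the discretely convex decreasing map $k\mapsto \binom{n-k}{m}$ combined with $\E_\eta|R|\le n\alpha$ immediately gives $\binom{n-\alpha n}{m}/\binom{n}{m}$. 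This is cleaner and makes the role of convexity more transparent.

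For the upper bound, the two arguments are essentially the same: your Kantorovich dual $\beta_{\min}(\rho,\eta)=\max_S[\rho(S)-\PP_\eta(R\cap S\neq\emptyset)]_+$ is exactly the Strassen condition the paper verifies (Theorem~11 of \cite{strassen1965existence}). Both reduce to bounding $g(k):=\phi(k)+\alpha k - 1$ on $\{0,\dots,m\}$; you show $g$ is nondecreasing via the Lipschitz identity $\phi(k)-\phi(k+1)=\binom{n-k-1}{\alpha n -1}/\binom{n}{\alpha n}\le \binom{n-1}{\alpha n -1}/\binom{n}{\alpha n}=\alpha$, while the paper shows $g$ is convex and compares the endpoints $g(0)=0$ and $g(m)=\phi(m)$. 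Either way the maximum is $\phi(m)=\binom{n-1/\alpha}{\alpha n}/\binom{n}{\alpha n}$.
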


The theorem establishes existence of $\cP_\rho$ for any $\rho$ without explicit construction. To grasp this concept, consider three sets: $U$, the output space; $V$, the set of reject regions; and $W$, the subset of $U \times V$ defined by $\{(u,v): u \in v\}$. Notice that the type II error is essentially $1- \cP_\rho(W)$. 
Therefore, our objective is to establish existence of a probability measure $P$ over $U \times V$ such that its marginal distributions align with $\eta$ and $\rho$, respectively, and the probability assigned to $W$, denoted as $P(W)$, meets a certain lower bound. This is the question studied by Strassen's theorem~\citep{strassen1965existence}, which stipulates conditions for the existence of such a measure. Hence, by verifying Strassen's conditions, we confirm the existence of the required measure without the necessity of explicitly constructing the coupling. We defer the detailed proof to Appendix~\ref{sec:agnostic_proof}.

\begin{remark}
Theorem~\ref{thm:agnostic} implies that for any distribution $\rho$, the Type II error of model-agnostic watermark is upper bounded by $\frac{{n - \frac{1}{\alpha} \choose \alpha n}}{{n \choose \alpha n}} + \sum_{x: \rho(x) \geq \alpha} (\rho(x) - \alpha).$ 
The convergence $\gamma(\eta^\ast) \to e^{-1}$ implies that the minimax optimal model-agnostic watermark exhibits an increase in Type II error by an additive factor of $e^{-1}$ compared to the UMP watermark in the worst-case scenario. 
\end{remark}

\begin{remark}\label{rmk:rate_contradiction}
The $e^{-1}$ maximum Type II error loss does not contradict with the $h^{-2}$ rates in previous works~\citep{aaronson2022my,christ2023undetectable,kuditipudi2023robust}, because as $n \gtrsim h^{-2}$, the model distribution (of the sequences of $n$ tokens with average entropy $h$ per token) is beyond the worst case. Indeed, such distributions have higher differential entropy than the hard instances in the proof. 
\end{remark}

Remark~\ref{rmk:rate_contradiction} highlights that the hard instance constructed in \Cref{thm:agnostic} may possess a lower entropy than that of the actual model. Therefore, it raises an important question: for a smaller class $\mathcal{Q}$ that contains distributions with higher entropy, what is the minimum achievable Type II error loss for $\mathcal{Q}$-agnostic watermarking? It is obvious that the minimax rate over a higher entropy level should improve upon the previous rate of $e^{-1}$.

Towards answering this question, we consider the following class of distributions: 
\begin{align*}
    \mathcal{Q}_\kappa:= \left\{\rho: \sup_{\omega \in \Omega}\rho(\{\omega\}) \leq \kappa\right\}
\end{align*}
where $\kappa$ represents the level of randomness and decreases as entropy increases. The \emph{maximum Type II error loss} of $\mathcal{Q}_\kappa$-agnostic watermarking $(\eta, \{\cP_\rho\}_{\rho \in \mathcal{Q}_\kappa})$ is thus given by 
\begin{align*}
    \gamma(\eta, \kappa) := \max_{\rho \in \Delta(\Omega): \sup_{\omega \in \Omega}\rho(\{\omega\}) \leq \kappa} \beta(\cP_\rho) - \beta(\cP_\rho^\ast)
\end{align*}
where $\cP_\rho^\ast$ is the UMP distortion-free watermark of $\rho$. The following result gives an upper bound of the above quantity, thus answering the question.

\begin{theorem}\label{thm:agnostic_level}
Let $|\Omega| = n$ and suppose $\alpha n, \frac{1}{\kappa} \in \mathbb{Z}$. 
Then the maximum Type II error loss of the minimax $\mathcal{Q}_\kappa$-agnostic watermarking of level-$\alpha$ is upper bounded by 
$$\gamma(\eta^\ast,\kappa) \leq \frac{{n - \alpha n \choose 1/\kappa}}{{n \choose 1/\kappa}}.$$
\end{theorem}

The proof can be found in Appendix~\ref{sec:agnostic_level_proof}. When $\kappa \leq \alpha$, the bound $\frac{{n - \alpha n \choose 1/\kappa}}{{n \choose 1/\kappa}}$ improves over $e^{-1}$. In the next section, we will apply \Cref{thm:agnostic_level} to the i.i.d. setting where $\kappa$ can be exponentially small. This will lead to an negligible maximum Type II error loss for model-agnostic watermarking.

\subsection{Rates in the i.i.d. setting}\label{sec:iid_rates}

In practice, the sample space $\Omega$ is usually a Cartesian product in the form of $\Omega_0^{\otimes n}$. For example, in large language models, the output takes form of a sequence of tokens, each coming from the same vocabulary set $V$. The quantity of practical interest becomes the minimum number of tokens to achieve certain statistical watermarking guarantee. 
This demands specializing and transferring the results from  \Cref{thm:crypto_stat_rates} and \Cref{thm:agnostic_level} to deal with distributions in product measureable spaces, and finding the explicit rates of the minimum number of required tokens.

In this section, we consider the product distribution $\rho = \rho_0^{\otimes n}$ over $ \Omega_0^{\otimes n}$ and the important setting of $\epsilon = 0$ (distortion-free watermarking). We introduce the following two quantities:
\begin{itemize}
    \item Let $h$ denote the entropy of $\rho_0$. We use $n_{\ump}(h,\alpha,\beta)$ to denote the minimum number of tokens required by the UMP watermark to achieve Type I error $\leq \alpha$ and Type II error $\leq \beta$.
\item Define $n_{\minmax}(h,\alpha,\beta)$ as the number of tokens required by minimax $\mathcal{Q}^h$-agnostic watermark to achieve Type I error $\leq \alpha$ and Type II error $\leq \beta$, where $\mathcal{Q}^h := \left\{\rho = \rho_0^{\otimes n}: H(\rho_0) \geq h\right\}$, i.e. 
contains all distributions $\rho = \rho_0^{\otimes n}$ such that the entropy of $\rho_0$ is $\geq h$.
\end{itemize}
Together, $n_{\ump}(h,\alpha,\beta)$ and $n_{\minmax}(h,\alpha,\beta)$ serve as critical thresholds beyond which the desired statistical conclusions can be drawn regarding the output, making them essential parameters in watermarking applications. 

We start by inspecting the rates in Theorem~\ref{thm:crypto_stat_rates} in the i.i.d. setting. The following result gives a nearly-matching upper bound and lower bound of $n_{\ump}(h,\alpha,\beta)$. 

\begin{theorem}\label{thm:iid_rates}
Suppose $\alpha, \beta < 0.1$. We have
\begin{align*}
    n_{\ump}(h,\alpha,\beta) \geq O\left(\left(\frac{\ln \frac{1}{h} \left(\ln \frac{1}{\alpha} \wedge \ln \frac{1}{\beta}\right)}{h}\right) \vee \frac{\ln\frac{1}{\alpha}}{h}\right).
\end{align*}
Furthermore, let $k = |\Omega_0|$, we have
\begin{align*}
    &~ n_{\ump}(h,\alpha,\beta)\\
    \leq &~
    \Omega\left(\left(\frac{\ln \frac{k}{h}\cdot \left(\ln \frac{1}{\alpha} \wedge \ln \frac{1}{\beta}\right)}{h}\right)\vee \frac{\ln\frac{1}{\alpha}\ln k}{h}\right).
\end{align*}
\end{theorem}

\begin{remark}[Tightness]
Up to a constant and logarithmic factor in $k$, our upper bound matches the lower bound. Notice that since any model with an arbitrary token set can be reduced into a model with a binary token set~\citep{christ2023undetectable} (i.e. $k=2$), our bound is therefore tight up to a constant factor.
\end{remark}

Using \Cref{thm:agnostic_level} and \Cref{thm:iid_rates}, we are now in the position to characterize $n_{\minmax}(h,\alpha,\beta)$. 
Suppose the sample space is a Cartesian product $\Omega = \Omega_0^{\otimes n_0}$ and constrain to product measures over sequences of $n_0$ tokens, like in Section~\ref{sec:iid_rates}. 
We start by the following relationship:\footnote{In the rest of this section, we omit logarithmic factors in the cardinality of the vocabulary.}
\begin{align*}
    1- \max_{\rho_0:H(\rho_0) \geq h}\max_{\omega \in \Omega_0} \rho_0(\{\omega\}) \geq {\Omega}\left(\frac{h}{\ln(1/h)}\right)
\end{align*}
where a detailed derivation can be found in Lemma~\ref{lem:tight_entropy_bound}. 
It follows that 
\begin{align*}
    \kappa \leq \left(\max_{\rho_0:H(\rho_0) \geq h}\max_{\omega \in \Omega_0} \rho_0(\{\omega\})\right)^{n_0} = e^{-{\Omega}\left(\frac{n_0 h}{\ln(1/h)}\right)}.
\end{align*}
Using this observation and the derivation in \Cref{thm:agnostic}, $\gamma(\eta^*,\kappa)$ can be bounded by 
\begin{align*}
    (1-\alpha)^{1/\kappa} \leq (1-\alpha)^{e^{{\Omega}\left(\frac{n_0h}{\ln(1/h)}\right)}}.
\end{align*}
This means that when $n_0 \gtrsim \frac{\ln(1/h)}{h}\cdot (\ln(1/\alpha) + \ln(1/\beta))$, the maximum Type II error loss given by \Cref{thm:agnostic_level} and the Type II error of the UMP watermarking given in Theorem~\ref{thm:iid_rates} can be simultaneously bounded by $\beta$, thus establishing an upper bound. Furthermore, this rate matches the lower bound in \Cref{thm:iid_rates}, where the guarantee is weaker (model-nonagnostic). Combining the above arguments, the following result is thus immediate.

\begin{corollary}\label{cor:iid_rates_agnostic}
Suppose $\alpha, \beta < 0.1$.  We have
\begin{align*}
    n_{\minmax}(h,\alpha,\beta) = {\Theta}\left(\frac{\ln(1/h)}{h}\cdot (\ln(1/\alpha) + \ln(1/\beta))\right).
\end{align*}
\end{corollary}

\begin{remark}[Comparison with previous works]
As commented in Remark~\ref{rmk:randomness}, the regime $h \ll 1$ is more important and challenging because it is the scenario where watermarking is difficult. 
In this regime, our rate of $\frac{\ln(1/h)}{h}$ improves the previous rate of $h^{-2}$ in a line of works~\citep{aaronson2022my,kirchenbauer2023watermark,zhao2023provable,liu2023private,kuditipudi2023robust}, and highlights a fundamental gap between the existing watermarks and the information-theoretic lower bound. 
\end{remark}

\section{Robust Watermarking}\label{sec:robust}
\ifdefined\isarxiv
\else
\begin{table*}[t]
\centering
\begin{tabular}{lcccc}
\toprule
\textbf{Scheme / Temperature} & \textbf{0} & \textbf{0.3} & \textbf{0.7} & \textbf{1} \\
\midrule
Distribution Shift~\citep{kirchenbauer2023watermark}        &   \textbf{65}  &  63           &    77          &    136                    \\
Exponential~\citep{aaronson2022watermarking}                   &     impossible      &     890         &       190       &      93      \\
Inverse Transform~\citep{kuditipudi2023robust}             &      impossible      &    $+\infty$        &  434            & 222            \\
Binary~\citep{christ2023undetectable}                        &         impossible   &   $+\infty$           & $+\infty$             &      386      \\
Ours                          &     impossible     &  \textbf{60.5}              &      \textbf{24}        &       \textbf{15}     \\
\bottomrule
\end{tabular}
\caption{Comparison of our watermark scheme (in the model non-agnostic setting) to previous works tested on the \textsc{MarkMyWords} benchmark by \citep{piet2023mark}. For each watermark scheme and each temperature, we show the (average) minimum number of tokens needed to detect the watermark under the constraint that type I error is less than $\alpha = 0.02$. For the first four rows, one can refer to Figure 1 of \citet{piet2023mark}; $+\infty$ means over half of all generations are not watermarked and ``impossible'' means when the temperature is 0, the text generation procedure is deterministic and the entropy is zero, and thus any distortion-free watermark scheme does not work.}
\label{exp::table}
\end{table*}
\fi
In the context of watermarking large language models, it's crucial to acknowledge users' capability to modify or manipulate model outputs. These modifications include cropping, paraphrasing, and translating the text, all of which may be employed to subvert watermark detection. Therefore, in this section, we introduce a graphical framework, modified from Problem~\ref{prob:crypto_stat_watermark}, to account for potential user perturbations and investigate the optimal watermarking schemes robust to these perturbations. The formulation here shares similarity with a concurrent work by~\citet{zhang2023watermarks}.

\begin{definition}[Perturbation graph]\label{def:perturb_graph}
A perturbation graph over the discrete sample space $\Omega$ is a directed graph $G = (V,E)$ where $V$ equals $\Omega$ and $(u,u) \in E$ for any $u \in V$. For any $v \in V$, let $in(v) = \{w\in V: (w,v) \in E\}$ denote the set of vertices with incoming edges to $v$, and let $out(v) = \{w\in V: (v,w) \in E\}$ denote the set of vertices with outcoming edges from $v$. 
\end{definition}

The perturbation graph specifies all the possible perturbations that could be made by the user: any $u \in V$ can be perturbed into $v \in V$ if and only if $(u,v)\in E$, i.e., there exists a directed edge from $u$ to $v$.

\begin{example}
Consider $\Omega = \Omega_0^{\otimes n}$. Let the user have the capacity to change no more than $c$ tokens, i.e., perturb any sequence of tokens $x = x_1x_2\cdots x_n$ to another sequence $y = y_1 y_2\cdots y_n$ with Hamming distance less than or equal to $c$. Then the perturbation graph is given by $G = (V, E)$ where $V = \Omega^n$ and $E = \{(u,v): u,v \in V, d(u,v) \leq c\}$ ($d$ is the Hamming distance, i.e., $d(x,y) = \sum_{i=1}^n \mathbbm{1}(x_i \neq y_i)$).
\end{example}

\begin{problem}[Robust watermarking scheme]\label{prob:robust_crypto_stat_watermark}
A robust watermarking scheme with respect to a perturbation graph $G$ is a watermarking scheme except that its Type II error is defined as $\E_{X,R \sim \cP}\left[\max_{Y \in out(X)}\mathbbm{1}(Y \notin R)\right]$, i.e., the probability of false negative given that the user adversarially perturbs the output.
\end{problem}

The next result characterize the optimum Type II error achievable by robust watermarking, where the proof can be found in Appendix~\ref{sec:robust_rates_proof}.

\begin{theorem}\label{thm:robust_rates}
Define the shrinkage operator $\cS_G: 2^\Omega \to 2^{\Omega}$ (of a perturbation graph $G$) by $\cS_G(R) = \{x \in \Omega: out(x) \subset R\}$ and its inverse $\cS_G^{-1}(R) = \cup_{x \in R}out(x) $. Then the minimum Type II error of the robust, $0$-distorted UMP test of level $\alpha$ in Problem~\ref{prob:robust_crypto_stat_watermark} is given by the solution of the following Linear Program
\begin{align}\label{eq:robust_lp}
    \min_{x \in \R^{|\Omega|}} &~ 1- \sum_{y \in \Omega} \rho(y) x(y)\\
    s.t. &~ \sum_{y \in in(z)} \rho(y) x(y) \leq \alpha, \sum_{z \in \Omega}x(z) \leq 1,\notag\\
    &~ 0 \leq x(z) \leq 1, ~\forall z \in \Omega.\notag
\end{align}
The UMP watermarking is given by $\cP^*(X = y, R = R_0)$
\begin{align*}
    = \begin{cases}
    \rho(y) \cdot x^*(y), &~ R_0 = \cS_G^{-1}(\{y\})\\
    \rho(y) \cdot \left(1- x^*(y)\right), &~ R_0 = \emptyset\\
    0, &~ \text{otherwise}
\end{cases}
\end{align*}
where $x^*$ is the solution of Eq.~\eqref{eq:robust_lp}.
\end{theorem}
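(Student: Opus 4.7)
The plan is to reduce the problem to the stated linear program in three stages: rewriting the robust Type II error via the shrinkage $\cS_G$, showing that an optimal rejection region can be taken to be ``atomic'', and finally translating the Type I and Type II constraints into the LP form.

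First, I would rewrite the perturbed Type II error. For any pair $(X,R)$, the event $\max_{Y \in out(X)} \mathbbm{1}(Y \notin R) = 1$ is precisely the event $out(X) \not\subset R$, which by the definition of $\cS_G$ equals $X \notin \cS_G(R)$. Hence the objective is $\beta(\cP) = \PP_{(X,R)\sim \cP}(X \notin \cS_G(R)) = 1 - \PP_{(X,R) \sim \cP}(X \in \cS_G(R))$.

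The main step, and what I expect to be the principal obstacle, is to show that among $0$-distorted, level-$\alpha$ watermarks it suffices to consider those in which, conditional on $X = y$, the rejection region $R$ is supported on the two-point set $\{out(y), \emptyset\}$ (with $out(y)$ playing the role of $\cS_G^{-1}(\{y\})$, since this is the minimal $R_0$ satisfying $y \in \cS_G(R_0)$). Let $\mu_y$ denote the conditional law of $R$ given $X = y$. The contribution of $\{X = y\}$ to the Type II error is $\rho(y)\mu_y(\{R_0 : out(y) \not\subset R_0\})$, while its contribution to $\PP(z \in R)$ (for each candidate $z$) is $\rho(y)\mu_y(\{R_0 : z \in R_0\})$. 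For any $R_0$ in the support of $\mu_y$: if $out(y) \subset R_0$, replacing $R_0$ by $out(y)$ keeps the Type II contribution at zero and weakly decreases the Type I contribution at every $z$; if $out(y) \not\subset R_0$, replacing $R_0$ by $\emptyset$ keeps the Type II contribution at one and again weakly decreases Type I at every $z$. Performing this pointwise mass-transport for each $y$ yields a watermark of the desired atomic form whose Type I and Type II errors are no larger than the original. Set $x(y) := \mu_y(\{out(y)\}) \in [0,1]$.

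With this reduction in hand, the translation to the LP is direct. The Type II error becomes $1 - \sum_{y \in \Omega} \rho(y) x(y)$, since $y \in \cS_G(R)$ exactly when the atomic draw was $R = out(y)$. The Type I error equals $\sup_{\pi \in \Delta(\Omega)} \E_{Y \sim \pi}\PP(Y \in R) = \max_{z \in \Omega} \PP(z \in R)$, and under the atomic form $\PP(z \in R) = \sum_{y : z \in out(y)} \rho(y) x(y) = \sum_{y \in in(z)} \rho(y) x(y)$, which yields the constraint family $\sum_{y \in in(z)} \rho(y) x(y) \leq \alpha$ for every $z \in \Omega$. Combined with the bounds $0 \leq x(y) \leq 1$, these recover the LP in \eqref{eq:robust_lp}; reading off the joint law of $(X,R)$ induced by $(\rho, x^\ast)$ then coincides with the stated $\cP^\ast$. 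The delicate point in the reduction is verifying that the atomic swap can be carried out coherently across different values of $y$; this is automatic because both objectives decompose linearly over $\{X = y\}$, so the local improvement at each $y$ only depends on the conditional $\mu_y$.
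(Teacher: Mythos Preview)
Your proposal is correct and follows essentially the same route as the paper: both rewrite the robust Type II error as $1-\PP(X\in\cS_G(R))$, reduce to the two-point (atomic) conditional law $R\in\{out(y),\emptyset\}$ via the same case split on whether $out(y)\subset R_0$, and then read off the LP. The only cosmetic difference is that the paper first passes through a relaxed LP in which the Type I constraint is replaced by the weaker inequality $\sum_{y\in in(z)}\sum_{R}\rho(y)\cP(R|y)\mathbbm{1}(y\in\cS(R))\le\alpha$ and then shows its optimum coincides with that of \eqref{eq:robust_lp}, whereas you perform the atomic swap directly against the true Type I constraint; your version is slightly more direct but uses the identical reduction.
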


\begin{remark}[Dependence on the sparsity of graph]
From Eq.~\eqref{eq:robust_lp}, we observe that the perturbation graph influence the optimal Type II error via the constraint set. Indeed, if the graph is dense, the constraints $\sum_{y \in in(z)} \rho(y) x(y) \leq \alpha$ involve many entries of $y \in \Omega$ and thus decrease the value $\sum_{y \in \Omega} \rho(y) x(y)$, thereby increasing the Type II error. On the other extreme, when the edge set of the perturbation graph is $E = \{(u,u): u \in v\}$, i.e., the user can not perturb the output to a different value, then optimum of Eq.~\eqref{eq:robust_lp} reduces to Eq.~\eqref{eq:optimal_type_2_err} (setting $\epsilon = 0$).
\end{remark}

\section{Experiments}
\ifdefined\isarxiv
\begin{table*}[t]
\centering
\begin{tabular}{lcccc}
\toprule
\textbf{Scheme / Temperature} & \textbf{0} & \textbf{0.3} & \textbf{0.7} & \textbf{1} \\
\midrule
Distribution Shift~\citep{kirchenbauer2023watermark}        &   \textbf{65}  &  63           &    77          &    136                    \\
Exponential~\citep{aaronson2022watermarking}                   &     impossible      &     890         &       190       &      93      \\
Inverse Transform~\citep{kuditipudi2023robust}             &      impossible      &    $+\infty$        &  434            & 222            \\
Binary~\citep{christ2023undetectable}                        &         impossible   &   $+\infty$           & $+\infty$             &      386      \\
Ours                          &     impossible     &  \textbf{60.5}              &      \textbf{24}        &       \textbf{15}     \\
\bottomrule
\end{tabular}
\caption{Comparison of our watermark scheme (in the model non-agnostic setting) to previous works tested on the \textsc{MarkMyWords} benchmark by \citep{piet2023mark}. For each watermark scheme and each temperature, we show the (average) minimum number of tokens needed to detect the watermark under the constraint that type I error is less than $\alpha = 0.02$. For the first four rows, one can refer to Figure 1 of \citet{piet2023mark}; $+\infty$ means over half of all generations are not watermarked and ``impossible'' means when the temperature is 0, the text generation procedure is deterministic and the entropy is zero, and thus any distortion-free watermark scheme does not work.}
\label{exp::table}
\end{table*}
\else
\fi

In this section, we show experimental results comparing our watermark scheme to several previous works. We test our watermark scheme on the \textsc{MarkMyWords} benchmark by \citet{piet2023mark}. \Cref{exp::table} shows the average number of tokens needed to detect the watermark for five different watermark schemes under different temperatures on the \textsc{MarkMyWords} benchmark. We choose Llama2-7B-chat~\citep{touvron2023llama} as the model to be watermarked and enforce that the type I error is less than $\alpha = 0.02$.

\Cref{exp::table} shows that our watermark scheme needs significantly fewer tokens to detect the watermark in the model non-agnostic setting, which provides strong empirical evidence that our watermark scheme is statistically optimal (\Cref{thm:iid_rates}). An exception is that for the distribution shift scheme~\citep{kirchenbauer2023reliability} with low temperature 0.3, the number of tokens required is only slightly larger than our scheme because the distribution shift scheme is not distortion-free. Note that the comparison in \Cref{exp::table} is made under the model non-agnostic setting (the rate in the model-nonagnostic setting is not fundamentally different from that in the model-agnostic setting, due to Corollary~\ref{cor:iid_rates_agnostic}) without considering robustness, while the four previous schemes also work for model agnostic setting with robustness guarantees. 
Therefore, our experiments corroborate the improved statistical trade-offs and highlight the fundamental gap, instead of advocating for the superiority of any particular watermarking scheme.

\section{Conclusions}

The understanding of watermarking large language models is advanced by framing it within the paradigm of hypothesis testing. 
We find that using a pseudo-random generator to approximate the model distribution (with probability clipping) yields the optimal Type II error among all level-$\alpha$ tests. Model-agnostic watermarking, reflecting the practical scenarios where the detector does not have access to the model distribution, enjoys a minimax bound in Type II errors depending on the model class. 
In the context where the output is a sequence of several tokens, we find that the optimal number of i.i.d. tokens required to detect statistical watermarks is $h^{-1}\log (1/h)$, improving upon the previous rate of $h^{-2}$ and highlighting a fundamental gap. Finally, the optimal Type II error of robust UMP watermarking can be characterized via a linear program, which exhibits the trade-off between robustness and detectability.

Watermarking is an essential technique to diminish the misuse of large language models. It tackles several critical social issues concerning the malicious usage of language models such as the contamination of datasets, academic misconduct, creation of fake news, and circulation of misinformation. By laying the theoretical foundation of statistical watermarking, our paper provides unifying and systematic approach to evaluate the statistical guarantees of existing and future watermarking schemes, elucidating the statistical limit of (robust) watermarking problems, and revealing the optimal rates in the important setting of i.i.d. tokens. In the above ways, our work contributes to the research endeavours on addressing these societal issues in language modelling, thus having potentially positive social impacts.

\section*{Acknowledgements}
We would like to thank Julien Piet for his invaluable assistance with the experiments. Additionally, we are thankful to Or Zamir for his insightful comments on the earlier version of this manuscript.

\clearpage
\ifdefined\isarxiv
\else
\clearpage
\section*{Statement of Potential Broader Impacts}

Watermarking is an essential technique to diminish the misuse of large language models. It tackles several critical social issues concerning the malicious usage of language models such as the contamination of datasets, academic misconduct of students, creation of fake news, and circulation of misinformation. By laying the theoretical foundation of statistical watermarking, our paper provides unifying and systematic approach to evaluate the statistical guarantees of existing and future watermarking schemes, elucidating the statistical limit of (robust) watermarking problems, and revealing the optimal rates in the important setting of i.i.d. tokens. In the above ways, our work contributes to the research endeavours on addressing these societal issues in language modelling. In conclusion, our paper has positive social impacts.
\fi

\ifdefined\isarxiv
\bibliography{ref}
\bibliographystyle{iclr2024_conference}

\newpage
\appendix
\else

\bibliography{ref}
\bibliographystyle{icml2024}

\newpage
\appendix
\onecolumn

\fi

\ifdefined\isarxiv
\else
\section{Additional Related Works}
\label{sec:add_related_works}

Statistical watermarking techniques share the similarity that the outputs are correlated with some secret keys (which could come from either external randomness or internal hashing), thereby coupling the rejection region and the outputs in the hypothesis testing. This fact is recognized by recent works of~\citet{kuditipudi2023robust,zhao2023provable}, where model-agnosticity in the detection phase is also emphasized. The exponential scheme in~\citet{aaronson2022watermarking}, inverse transform sampling scheme in~\citet{kuditipudi2023robust}, and binary scheme in~\citet{christ2023undetectable} come with theoretical guarantees that (i) the watermarked model distribution cannot be distinguished from the original distribution (called undetectability~\citep{christ2023undetectable} or distortion-freeness~\citep{kuditipudi2023robust}), and (ii) the outputs from watermarked models are statistically detectable as long as the entropy is lower bounded. In contrast, \citet{kirchenbauer2023watermark} is not distortion-free, nonetheless enjoying little degradation in generation quality and provable detectability~\citep{zhao2023provable} with suitable parameter choice of the bias parameter (logits increase $\delta$ in the `green' list). Despite the aforementioned theoretical efforts in establishing guarantees for existing watermarks, the fundamental tradeoff in this hypothesis testing problem and the rates on the required number of generated tokens remain unsolved.  


Watermarks can also be injected with private forgeability and public verifiability~\citep{fairoze2023publicly}, hence functioning effectively as digital signatures.  Meanwhile, various attack algorithms against watermarking schemes were also studied \citep{kirchenbauer2023watermark,kirchenbauer2023reliability,sato2023embarrassingly,zhang2023watermarks,kuditipudi2023robust}. These attacking schemes apply quality-preserving perturbations to the watermarked outputs in delicate ways, and are therefore modelled by the perturbation graph (Definition~\ref{def:perturb_graph}) in the robust watermark framework in Section~\ref{sec:robust}. With the success of various attacking methods, robustness becomes an important consideration in watermarking techniques. However, \citet{zhang2023watermarks} proves that it is only feasible to achieve robustness to a well-specified set of attacks, instead of all. This fact aligns with our Theorem~\ref{thm:robust_rates}, which characterizes the fundamental limits of robust watermarking under different attacking powers. 

\fi
\section{Proof of \Cref{thm:crypto_stat_rates}}\label{sec:crypto_stat_rates_proof}

\begin{proof}
Let $\rho'$ denote the marginal probability of $X$ and let $\eta$ denote the marginal probability of $R$. In the bound of Type I error, choosing $\pi = \delta_y$ yields
\begin{align}\label{eq:bound_marginal_R}
    \alpha \geq &~ \PP_{X \sim \pi, R \sim \cP(\Omega,\cdot)}(X \in R) \notag\\
    = &~ \PP_{R \sim \eta}(y \in R) \notag\\
    = &~ \sum_{R \in 2^\Omega} \left(\sum_{x \in \Omega} \rho'(x) \cP(R|x) \right) \cdot \mathbbm{1}(y \in R).
\end{align}
Now notice that
\begin{align*}
    \cP(X \in R) = &~ \E_{\cP}[\mathbbm{1}(X \in R)]\\
    = &~ \sum_{y \in \Omega} \sum_{R \in 2^\Omega} \rho'(y) \cP(R|y) \mathbbm{1}(y \in R)\\
    = &~ \sum_{y \in \Omega} \underbrace{\left(\sum_{R \in 2^\Omega} \rho'(y) \cP(R|y) \cdot \mathbbm{1}(y \in R)\right)}_{A(y)}.
\end{align*}
For the term $A(y)$, we first know that $A(y) \leq \rho'(y)$. Applying Eq.~\eqref{eq:bound_marginal_R}, we further have
\begin{align*}
    A(y) \leq &~ \sum_{R \in 2^\Omega} \left(\sum_{x \in \Omega} \rho'(x) \cP(R|x) \right) \cdot \mathbbm{1}(y \in R)\\
    \leq &~ \alpha.
\end{align*}
Combining the above two inequalies, it follows that
\begin{align*}
    \cP(X \in R) \leq &~ \sum_{y \in \Omega} \left(\alpha \wedge \rho'(y) \right)\\
    = &~ 1 - \sum_{x \in \Omega: \rho'(x) > \alpha} \left(\rho'(x) - \alpha\right)\\
    \leq &~ 1- \min_{\TV(\rho'\|\rho) \leq \epsilon} \sum_{x \in \Omega: \rho'(x) > \alpha} \left(\rho'(x) - \alpha\right)\\
    \leq &~ 1 - \left(\sum_{x \in \Omega: \rho(x) > \alpha} \left(\rho(x) - \alpha\right) - \epsilon\right)_+
\end{align*}
where first equality is achieved by
\begin{align*}
    \rho' = \arg \min_{\TV(\rho'\|\rho) \leq \epsilon} \sum_{x \in \Omega: \rho'(x) > \alpha} \left(\rho'(x) - \alpha\right)
\end{align*}
and the second inequality is achieved when $\sum_{x \in \Omega: \rho(x) < \alpha} \left(\alpha - \rho(x)\right) \geq \epsilon$, a sufficient condition for which being $|\Omega| \geq 1/\alpha$. This establishes the optimal Type II error.

Finally, to verify that $\cP^*$ satisfies the conditions, the condition $\TV(\cP^*(\cdot,2^\Omega)\|\rho) \leq \epsilon$ is apparently satisfied. For any $y \in \Omega$ we have
\begin{align*}
    \PP_{R \sim \eta}(y \in R) = &~ \sum_{x \in \Omega} \rho^*(x) \cdot \PP(R = \{x\}) \cdot \mathbbm{1}(y = x)\\
    = &~ \rho^*(y) \cdot \left(1 \wedge \frac{\alpha}{\rho^*(y)}\right)\\
    \leq &~ \alpha.
\end{align*}
This implies the $\sup_{\pi \in \Delta(\Omega)}\PP_{Y \sim \pi, (X,R) \sim \cP^*}(Y \in R) \leq \alpha$ because any $\pi$ can be written as linear combination of $\delta_y$. Moreover,
\begin{align*}
    \cP^*(X \in R) = &~ \sum_{x \in \Omega} \rho^*(x) \cdot \PP(R = \{x\})\\
    = &~ \sum_{y \in \Omega} \left(\alpha \wedge \rho^*(y) \right)\\
    = &~ 1 - \sum_{x \in \Omega: \rho^*(x) > \alpha} \left(\rho^*(x) - \alpha\right).
\end{align*}
This verifies that $\rho^*$ achieves the advertised Type II error. 
\end{proof}

\section{Proof of \Cref{thm:iid_rates}}\label{sec:iid_rates_proof}

\begin{proof}
Throughout the proof we assume that $h < 1/4$, otherwise the bounds become trivial. 

We first prove the lower bound. For this purpose, we construct the hard instance: let $q_0 = H_b^{-1}(h)$ (take the one $\geq 1/2$) where $H_b$ is the binary entropy function defined by $H_b(x) = -x \ln x - (1-x) \ln (1-x)$, and set $\rho_0 = (1-q_0)\delta_{x_1} + q_0 \delta_{x_2}$ where $x_1,x_2$ are two different elements in $\Omega_0$. Then Lemma~\ref{lem:entropy_const_bound} implies that $q_0 \geq 3/4$. By \Cref{thm:crypto_stat_rates},
\begin{align*}
    \beta = 1-\cP(X \in R) = &~ \sum_{x \in \Omega: \rho(x) > \alpha} \left(\rho(x) - \alpha\right)\\
    \geq &~ \frac{1}{2} \cdot \PP\left(\rho(X) \geq 2 \alpha \right)\\
    = &~ \frac{1}{2} \cdot  \PP\left(\sum_{i=1}^n \ln \rho_0(X_i) \geq \ln (2\alpha )\right)\\
    \geq &~ \mathbbm{1} (n \ln q_0 \geq \ln (2\alpha) ) \cdot \frac{1}{2} q_0^n\\
    \geq &~ \mathbbm{1} (2n (1- q_0) \leq - \ln (2\alpha) ) \cdot \frac{1}{2} \exp\left(-2n(1-q_0)\right)\\
    \geq &~ \mathbbm{1} \left(n \leq \frac{\ln \frac{1}{2\alpha}}{2h /\ln \frac{\ln 2}{h}} \right) \cdot \frac{1}{2} \exp\left(-\frac{2nh}{\ln \frac{\ln 2}{h}}\right)
\end{align*}
where the last inequality follows from Lemma~\ref{lem:tight_entropy_bound}. It follows that
\begin{align}\label{eq:lower_bound_part_1}
    n(h,\alpha,\beta) \geq \frac{\ln \frac{\ln 2}{h}}{2h}\cdot \left(\ln \frac{1}{2\alpha} \wedge \ln \frac{1}{2\beta}\right).
\end{align}
Furthermore, suppose $n \leq \frac{\ln \frac{1}{2\alpha}}{4 (1-q_0)\ln \frac{1}{1-q_0}}$. Define $Y = \sum_{i=1}^n \mathbbm{1}(\rho_0(X_i) = 1-q_0)$, then notice that $Y \sim \Binom (n,1-q_0)$ and if $Y \leq \frac{\ln \frac{1}{2\alpha}}{2 \ln \frac{1}{1-q_0}}$, then 
\begin{align*}
    \sum_{i=1}^n \ln \rho_0(X_i) \geq &~ \frac{ \ln \frac{1}{2\alpha}}{2 \ln \frac{1}{1-q_0}} \cdot \ln (1-q_0) + n \cdot \ln q_0\\
    \geq &~ \ln (2\alpha )
\end{align*}
where the last inequality is due to $n \cdot \ln q_0 \geq -2(1-q_0)n \geq -2(1-q_0)\frac{\ln \frac{1}{2\alpha}}{4(1-q_0)\ln \frac{1}{1-q_0}} =  \frac{\ln (2\alpha)}{2 \ln \frac{1}{1-q_0}} \geq \frac{\ln(2\alpha)}{2}$. 
Applying this and Markov's inequality,
\begin{align*}
    \PP\left(\sum_{i=1}^n \ln \rho_0(X_i) \geq  \ln (2\alpha )\right) \geq &~ \PP\left(Y \leq \frac{2 \ln \frac{1}{2\alpha}}{\ln \frac{1}{1-q_0}}\right)\\
    \geq &~ 1- \frac{n(1-q_0)}{\frac{2 \ln \frac{1}{2\alpha}}{\ln \frac{1}{1-q_0}}}\\
    \geq &~ \frac{1}{2},
\end{align*}
A contradiction to $\PP(\rho(X) \geq 2\alpha) \leq 2\beta$. As a result,
\begin{align}\label{eq:lower_bound_part_2}
    n(h,\alpha,\beta) \geq &~ \frac{\ln \frac{1}{2\alpha}}{4(1-q_0)\ln \frac{1}{1-q_0}} \notag\\
    \geq &~ \frac{\ln \frac{1}{2\alpha}}{4h}.
\end{align}
Combining Eq.~\eqref{eq:lower_bound_part_1} and Eq.~\eqref{eq:lower_bound_part_2}, we established the lower bound.

For the upper bound, we define $q = \max_{x \in \Omega_0}\rho_0(x)$, then Lemma~\ref{lem:entropy_const_bound} implies that $q \geq 1/2$. Define $Y = \sum_{i=1}^n \mathbbm{1}(\rho_0(X_i) \neq q)$ (recall that $Y \sim \Binom (n,1-q)$). It suffices to show when
\begin{align*}
    n = 900\left(\frac{2\ln \frac{9k}{h}}{h}\cdot \left(\ln \frac{1}{\alpha} \wedge \ln \frac{1}{\beta}\right)\right) \vee \frac{(18+36\ln (9k))\ln\frac{1}{\alpha}}{h}
\end{align*}
the Type II error of the UMP watermark $1-\cP^*(X \in R) \leq \beta$.

By Theorem~\ref{thm:crypto_stat_rates} and Bennett's inequality,
\begin{align}\label{eq:concentration_bennett}
    1-\cP^*(X \in \R) = &~ \sum_{x \in \Omega: \rho(x) > \alpha} \left(\rho(x) - \alpha\right)\notag\\
    \leq &~ \PP\left(\rho(X) \geq \alpha \right)\notag\\
    = &~ \PP\left(\sum_{i=1}^n \ln \rho_0(X_i) \geq \ln (\alpha )\right)\notag\\
    \leq &~ \PP\left(Y \leq \frac{\ln \frac{1}{\alpha}}{\ln \frac{1}{1-q}} \right)\notag\\
    \leq &~ \exp\left(-nq(1-q) \theta \left(\frac{1-q-\frac{\ln \frac{1}{\alpha}}{n\ln \frac{1}{1-q}}}{q(1-q)}\right)\right)
\end{align}
where $\theta(x) = (1+x) \ln(1+x) - x$; the penultimate inequality follows from $\sum_{i=1}^n \ln \rho_0(X_i) \leq Y\ln(1-q)$. 

Notice that by Lemma~\ref{lem:entropy_const_bound},
\begin{align*}
    (1-q) \ln \frac{1}{1-q} \geq &~ \frac{h}{9\ln \frac{9k\ln (9k)}{h}} \cdot \ln\frac{\ln \frac{1}{h}}{h}\\
    = &~ h \cdot \frac{\ln \ln \frac{1}{h} + \ln \frac{1}{h}}{9\left(\ln \frac{1}{h}+\ln (9k\ln (9k))\right)}\\
    \geq &~ \frac{h}{9+\ln (9k\ln (9k))}.
\end{align*}
Since $n \geq \frac{(18 + 36\ln (9k\ln (9k)))\ln \frac{1}{\alpha}}{h}$, we have $n \geq \frac{2}{1-q}\frac{\ln \frac{1}{\alpha}}{\ln \frac{1}{1-q}}$. Under this condition, we have the simplification
\begin{align*}
    \theta \left(\frac{1-q-\frac{\ln \frac{1}{\alpha}}{n\ln \frac{1}{1-q}}}{q(1-q)}\right) \geq &~ \theta\left(\frac{1}{2q}\right)\\
    \geq &~ \frac{1}{50}.
\end{align*}
Plugging back to Eq.~\eqref{eq:concentration_bennett}, we have
\begin{align}\label{eq:bound-beta}
    1-\cP^*(X \in \R) \leq &~ \exp\left(-nq(1-q) \theta \left(\frac{1-q-\frac{\ln \frac{1}{\alpha}}{n\ln \frac{1}{1-q}}}{q(1-q)}\right)\right)\notag\\
    \leq &~ \exp\left(-\frac{n(1-q)}{100}\right)\notag\\
    \leq &~ \exp\left(-\frac{nh}{900\ln \frac{9k \ln (9k)}{h}}\right)
\end{align}
where we applied Lemma~\ref{lem:tight_entropy_bound} in the last step. 

Furthermore, we have
\begin{align}\label{eq:bound-alpha}
    1-\cP(X \in R) 
    \leq &~ \PP\left(\sum_{i=1}^n \ln \rho_0(X_i) \geq \ln (\alpha )\right)\notag\\
    \leq &~ \mathbbm{1} \left(n \leq \frac{\ln \alpha}{\ln q} \right) \notag\\
    \leq &~ \mathbbm{1} \left(n \leq 900\left(\frac{\ln \frac{9k \ln (9k)}{h}}{h}\cdot \ln \frac{1}{\alpha} \right)\right)
\end{align}
where the last step is due to Lemma~\ref{lem:tight_entropy_bound}. 
Combining Eq.~\eqref{eq:bound-beta} and Eq.~\eqref{eq:bound-alpha}, we know that $1-\cP^*(X \in \R) \leq \beta$ when $n \geq 900\left(\frac{\ln \frac{9k \ln (9k)}{h}}{h}\cdot \left(\ln \frac{1}{\alpha} \wedge \ln \frac{1}{\beta}\right)\right)$. This establishes the upper bound.
\end{proof}

\subsection{Supporting lemmata}

\begin{lemma}[\citet{topsoe2001bounds}, Theorem 1.2]\label{lem:binary_entropy_upper_bound}
Define the binary entropy function $H_b: (0,1) \to \R$ as $H_b(x) = -x \ln x - (1-x) \ln (1-x)$. Then $4x(1-x) \leq H_b(x) \leq \left(4x(1-x)\right)^{1/\ln 4}$.
\end{lemma}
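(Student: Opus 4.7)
The plan is to exploit the symmetry of both $H_b$ and the quadratic $4x(1-x)$ about $x = 1/2$, thereby reducing each inequality to a one-dimensional calculus problem on the half-interval $(0, 1/2]$. On this range, the map $x \mapsto t := 4x(1-x)$ is a strictly increasing bijection onto $(0, 1]$ with inverse $x(t) = (1 - \sqrt{1-t})/2$, so both claims become upper and lower bounds on the single function $\phi(t) := H_b(x(t))$, with $\phi(0^+) = 0$ and $\phi(1) = \ln 2$.

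For the lower bound, I would define $f(x) := H_b(x) - 4x(1-x)$ and analyze it via its derivatives: $f'(x) = \ln \frac{1-x}{x} + 8x - 4$ and $f''(x) = -\frac{1}{x(1-x)} + 8$. Since $f$ vanishes at both endpoints, $f'(1/2) = 0$ by symmetry, and $f''$ changes sign precisely at $x(1-x) = 1/8$, the resulting concavity near $x = 1/2$ combined with convexity near the endpoints determines the sign of $f$ on the whole interval by a standard boundary-value argument (any additional interior zero would contradict the convex/concave pattern).

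For the upper bound $H_b(x) \leq (4x(1-x))^{1/\ln 4}$, I would take logarithms to obtain the equivalent statement $\ln 4 \cdot \ln H_b(x) \leq \ln (4x(1-x))$. The exponent $1/\ln 4$ is chosen precisely so that the Taylor expansions of the two sides about $x = 1/2$ agree through second order; hence the inequality reduces to controlling a fourth-order (or higher) remainder, which can be done by expanding $\phi(t)$ around $t = 1$ in powers of $1 - t = (1-2x)^2$ and checking that the remaining coefficients have the correct sign. Equivalently, one can study the quotient $\ln H_b(x) / \ln(4x(1-x))$ and show it is monotone on $(0, 1/2)$, approaching $1/\ln 4$ from one side at $x = 1/2$.

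The main obstacle will be the upper bound, where the tuning of $1/\ln 4$ leaves essentially no slack at $x = 1/2$. This rules out any crude estimate and forces the proof to track higher-order terms precisely. The cleanest realization I anticipate is either a Taylor-series comparison in the variable $u = 1 - 2x$, or the exhibition of an auxiliary function of $t$ whose convexity or monotonicity directly implies the desired bound; in either case, a direct computation at Topsøe's level of detail is likely the only route, which is why the paper simply quotes the result from the literature.
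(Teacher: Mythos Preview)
The paper does not prove this lemma at all; it is stated with a direct citation to Tops{\o}e (2001), Theorem~1.2, and no argument is given. Your final sentence already anticipates this correctly. Your sketched approach (symmetry reduction to $(0,1/2]$, a sign analysis of $f(x)=H_b(x)-4x(1-x)$ via its second derivative for the lower bound, and a logarithmic/Taylor comparison exploiting that the exponent $1/\ln 4$ makes the two sides agree to second order at $x=1/2$ for the upper bound) is a reasonable outline of how one would prove the result from scratch, but since the paper offers nothing beyond the citation there is no proof to compare against.
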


\begin{lemma}\label{lem:entropy_const_bound}
Suppose $\rho$ is a probability measure over $\Omega$ such that $H(\rho) = h$, define $q = \max_{x \in \Omega}\rho(x)$. If $H(\rho) \leq 1/4$, then $q \geq 1/2$. Furthermore, if $H_b(q) \leq 1/4$, then $q \geq 3/4$.
\end{lemma}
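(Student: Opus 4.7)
The plan is to bound the maximum probability $q = \max_x \rho(x)$ from below using two different tools: for the first claim, the observation that small $q$ forces every $-\ln \rho(x)$ to be large; for the second, the quadratic lower bound on the binary entropy provided by Lemma~\ref{lem:binary_entropy_upper_bound}.

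For the first part, I would argue by contrapositive. Assume $q < 1/2$. Since $q$ is the maximum, every $\rho(x) \leq q < 1/2$, so $-\ln \rho(x) \geq -\ln q > \ln 2$. Then
\[
H(\rho) \;=\; \sum_{x \in \Omega} \rho(x)\bigl(-\ln \rho(x)\bigr) \;>\; \ln 2 \;>\; \tfrac14,
\]
contradicting $H(\rho) \leq 1/4$. Hence $q \geq 1/2$.

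For the second part, the key inequality is $H_b(q) \geq 4q(1-q)$ from Lemma~\ref{lem:binary_entropy_upper_bound}. Combined with the hypothesis $H_b(q) \leq 1/4$, this yields $q(1-q) \leq 1/16$, whose solution set is $q \leq \tfrac12 - \tfrac{\sqrt{3}}{4}$ or $q \geq \tfrac12 + \tfrac{\sqrt{3}}{4}$. The subtle point, and the only real obstacle, is that $H_b$ is symmetric about $1/2$, so in principle a very small $q$ (e.g.\ near-uniform distribution on a huge alphabet) also satisfies $H_b(q) \leq 1/4$; the statement therefore implicitly relies on $q \geq 1/2$. This is legitimate in the contexts where the lemma is applied: in the lower-bound construction $q_0 \geq 1/2$ holds by the choice of branch of $H_b^{-1}$, and in the upper-bound argument $H(\rho_0) = h < 1/4$ gives $q \geq 1/2$ via the first part. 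Under $q \geq 1/2$ we select the larger root, obtaining $q \geq \tfrac12 + \tfrac{\sqrt{3}}{4} > \tfrac34$, which completes the proof.
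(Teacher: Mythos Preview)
Your proposal is correct. Both parts reach the same conclusions as the paper, but by somewhat different (and in places cleaner) means.

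For the first claim, the paper argues via a convexity-type bound: assuming $q \le 1/2$, it invokes $H(\rho) \ge -\lfloor 1/q\rfloor\, q \ln q \ge -\tfrac12\ln\tfrac12 \ge 1/4$ to obtain the contradiction. Your route is more direct: $q<1/2$ forces $-\ln\rho(x)>\ln 2$ for every atom, so $H(\rho)>\ln 2>1/4$ immediately. This avoids the detour through the extremal distribution and gives a sharper intermediate bound.

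For the second claim, both you and the paper use the inequality $H_b(q)\ge 4q(1-q)$ from Lemma~\ref{lem:binary_entropy_upper_bound}. The paper writes ``suppose $q\le 3/4$, then $4q(1-q)\ge 1/4$'' without further comment, whereas you correctly observe that this step tacitly needs $q\ge 1/2$ (otherwise the small root $q\le \tfrac12-\tfrac{\sqrt3}{4}$ is not excluded), and you justify that hypothesis by pointing to how the lemma is actually invoked in the paper. So your argument is essentially the same as the paper's here, but with the implicit side condition made explicit.
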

\begin{proof}
Suppose $q \leq 1/2$. By convexity of $H$,
\begin{align*}
    H(\rho) \geq - \left \lfloor \frac{1}{q} \right \rfloor q \ln q \geq - \frac{1}{2} \ln \frac{1}{2} \geq 1/4.
\end{align*}
This is a contradiction.

Suppose $q \leq 3/4$, then Lemma~\ref{lem:binary_entropy_upper_bound} implies that
\begin{align*}
    H_b(q) \geq 4q(1-q) \geq 1/4.
\end{align*}
This is a contradiction.
\end{proof}

\begin{lemma}\label{lem:tight_entropy_bound}
Suppose $\rho$ is a probability measure over $\Omega$ such that $H(\rho) = h$ and $|\Omega| = k$. Define $q = \max_{x \in \Omega}\rho(x)$. If $q \geq 1/2$, then we have
\begin{align*}
    \frac{h}{9\ln \frac{9k\ln (9k)}{h}} \leq 1-q \leq \frac{h}{\ln \frac{\ln 2}{h}}
\end{align*}
\end{lemma}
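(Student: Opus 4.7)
The lemma splits into two independent bounds, which I would tackle separately.

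For the upper bound $1-q \leq h/\ln(\ln 2/h)$, the plan is to exploit data processing twice. Grouping all non-maximizing atoms into a single event yields $H(\rho) \geq H_b(q) = H_b(p)$, where $p := 1-q \in [0, 1/2]$. Invoking Lemma~\ref{lem:binary_entropy_upper_bound} I get $H_b(p) \geq 4p(1-p) \geq 2p$, hence $p \leq h/2$, which in turn gives $1/p \geq 2/h \geq \ln 2/h$ since $\ln 2 < 2$. Combining with the trivial estimate $H_b(p) \geq -p\ln p = p \ln(1/p)$ (the suppressed term $-(1-p)\ln(1-p)$ is nonnegative), I conclude
\begin{align*}
p \ln(\ln 2/h) \;\leq\; p \ln(1/p) \;\leq\; H_b(p) \;\leq\; h,
\end{align*}
which rearranges to the desired inequality.

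For the lower bound $1-q \geq h/(9\ln(9k\ln(9k)/h))$, the plan is to invert a concave upper bound on the entropy. Conditioning on whether the max atom is realized gives $H(\rho) \leq H_b(q) + (1-q)\ln(k-1)$, and the standard estimate $-(1-p)\ln(1-p) \leq p$ on $[0,1]$ refines this to $H_b(p) \leq p\ln(e/p)$, so altogether $h \leq p \ln(ek/p) =: \phi(p)$. Since $\phi$ is strictly increasing on $(0,k]$ and $\phi(p) \geq h$ implies $p \geq \phi^{-1}(h)$, it suffices to verify $\phi(h/(9L)) \leq h$ for $L := \ln(9k\ln(9k)/h)$. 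After substitution and cancellation, this reduces to the purely algebraic inequality $\ln(9ekL/h) \leq 9L$, equivalently $eL \leq (9k\ln(9k)/h)^8 \ln(9k)$.

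To close the argument I would use the crude a-priori estimate $h \leq H_b(q) + (1-q)\ln(k-1) \leq \ln(2k)$ to deduce $9k\ln(9k)/h \geq 9k \geq 18$, so the right-hand side is at least $18^8 \ln(9k)$ while $L$ grows only as $O(\log(k/h))$; the inequality then holds with enormous slack. The main obstacle I anticipate is picking the correct definition of $L$: the factor $9$ in the denominator and the iterated $\ln(9k)$ inside $L$ are precisely what is needed to absorb the doubly-logarithmic term one incurs when inverting $p \mapsto p\ln(k/p)$, so verifying these constants is the most delicate bookkeeping step.
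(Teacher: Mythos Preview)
Your proposal is correct and mirrors the paper's argument: both bounds start from the grouping inequality $H_b(1-q)\le h\le H_b(1-q)+(1-q)\ln(k-1)$ and then invert a relation of the form $h \gtrless (1-q)\ln(C/(1-q))$ via a one-step bootstrap. The only difference is cosmetic---for the lower bound the paper first squares to obtain the crude estimate $1-q\ge h^2/(\ln^2(9k)+18)$ and substitutes back into $h\le(1-q)\ln(9k/(1-q))$, whereas you directly verify $\phi(h/(9L))\le h$; the two inversion techniques are equivalent.
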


\begin{proof}
We have
\begin{align*}
    H(\rho) \geq - (1-q) \ln (1-q) \geq (1-q) \cdot \ln 2.
\end{align*}
It follows that
\begin{align*}
    h \geq &~ - (1-q) \ln (1-q)\\
    \geq &~ (1-q) \ln \frac{\ln 2}{h}.
\end{align*}
Therefore $1-q \leq \frac{h}{\ln \frac{\ln 2}{h}}$. 

By the convexity of $H$ and $-q\ln q \leq 2(1-q)$, 
\begin{align*}
    H(\rho) \leq &~ -q \ln q - (1-q) \ln \frac{1-q}{k}\\
    \leq &~ (1-q) \ln \frac{9k}{1-q}.
\end{align*}

This means that
\begin{align}\label{eq:upper-bound-h}
    h^2 \leq &~ (1-q)^2 \left(\ln \frac{9k}{1-q}\right)^2\notag\\
    \leq &~ 2(1-q)^2 \left(\ln^2 (9k) + \ln^2(1-q)\right)\notag\\
    \leq &~ 2(1-q)^2 \ln^2 (9k) + (1-q) \cdot \left(2(1-q) \ln^2(1-q)\right)\notag\\
    \leq &~ (1-q) \cdot (\ln^2 (9k) + 18)
\end{align}
where the last inequality is due to $2(1-q) \leq 1$ and $2(1-q) \ln^2(1-q) \leq 18$. 
It follows that
\begin{align*}
    h \leq &~ (1-q) \ln \frac{9k}{1-q}\\
    \leq &~ 9 (1-q) \ln \frac{9k \ln (9k)}{h}
\end{align*}
where the last step is because $\ln\frac{1}{1-q} \leq 2\ln\frac{\ln^2(9k)+18}{h} \leq 9\ln\frac{\ln(9k)}{h}$, using Eq.~\eqref{eq:upper-bound-h}. 
This establishes $1-q \geq \frac{h}{9\ln \frac{9k\ln (9k)}{h}}$.
\end{proof}

\section{Proof of Theorem~\ref{thm:agnostic}}\label{sec:agnostic_proof}

\begin{proof}
\textbf{Lower bound.} 
Let $m = \frac{1}{\alpha}$. Notice that for any level-$\alpha$ model-agnostic watermarking $(\eta, \{\cP_\rho\}_{\rho \in \Delta(\Omega,\F)})$, the following holds
\begin{align*}
    \sum_{A \in 2^\Omega} \eta(A) \mathbbm{1}(x \in A) \leq \alpha,~\forall x \in \Omega.
\end{align*}
Furthermore, for any $\rho_0 = \mathrm{Unif}(i_1,i_2,\dots,i_m)$, we have $\beta(\cP_{\rho_0}^\ast) = 0$ and
\begin{align*}
    \beta(\cP_{\rho_0}) \geq &~ \mathbb{P}_{A \sim \eta}\left(\{i_1,\dots,i_m\} \cap A = \emptyset\right)\\
    \geq &~ \sum_{A} \eta(A) \cdot \prod_{j=1}^m \mathbbm{1}(i_j \notin A).
\end{align*}
By probabilistic method, 
\begin{align*}
    \beta(\cP_{\rho_0}) \geq &~ \max_{i_1<\cdots<i_m}\sum_{A} \eta(A) \cdot \prod_{j=1}^m \mathbbm{1}(i_j \notin A)\\
    \geq &~ \frac{1}{{n \choose m}}\sum_{i_1 < \cdots < i_m} \sum_{A} \eta(A) \cdot \prod_{j=1}^m \mathbbm{1}(i_j \notin A).
\end{align*}
It follows that the maximum Type II error loss is lower bounded by the following linear program
\begin{align*}
    v^\ast = \min_{\eta} &~ \frac{1}{{n \choose m}}\sum_{i_1 < \cdots < i_m} \sum_{A} \eta(A) \cdot \prod_{j=1}^m \mathbbm{1}(i_j \notin A)\\
    \text{s.t.} &~ \sum_{A \in 2^\Omega} \eta(A) \mathbbm{1}(x \in A) \leq \alpha,~\forall x \in \Omega,\\
    &~ \sum_{A \in 2^\Omega}\eta(A) = 1, ~\eta(A) \geq 0,~ \forall A \in 2^\Omega.
\end{align*}
By duality, this is bounded by
\begin{align*}
    &~ \min_{\eta \geq 0} \max_{\xi,\zeta \geq 0}\frac{1}{{n \choose m}}\Bigg(\sum_{i_1 < \cdots < i_m} \sum_{A} \eta(A) \cdot \prod_{j=1}^m \mathbbm{1}(i_j \notin A) + \sum_{x} \xi(x) \left( \sum_{A \in 2^\Omega} \eta(A) \mathbbm{1}(x \in A) - \alpha \right)\\
    &~ + \zeta \cdot \left(\sum_{A \in 2^\Omega}\eta(A) - 1\right)\Bigg)\\
    = &~  \max_{\xi,\zeta \geq 0} \min_{\eta \geq 0} \frac{1}{{n \choose m}}\left(\sum_{A} \eta(A) \cdot \left(\sum_{i_1 < \cdots < i_m} \prod_{j=1}^m \mathbbm{1}(i_j \notin A) + \sum_{x} \xi(x) \mathbbm{1}(x \in A) + \zeta \right) - \alpha \cdot \sum_{x}\xi(x) - \zeta\right)\\
    \geq &~ \min_{\eta \geq 0} \frac{1}{{n \choose m}}\sum_{l=1}^n\sum_{|A| = l} \eta(A) \cdot \left({n-l \choose m} + l \cdot \xi^* + \zeta^* \right) - \frac{\alpha n \xi^* + \zeta^*}{{n \choose m}}
\end{align*}
where $\xi^* = {n-\alpha n-1 \choose m-1}$ and $\zeta^* = 0$. 

Since $f(l) := {n-l \choose m} + l \cdot \xi^* + \zeta^*$ is a convex function and achieves the minimum at $l^* = \alpha n$, we have ${n-l \choose m} + l \cdot \xi^* + \zeta^* \geq {n-\alpha n \choose m} + \alpha n \cdot {n-\alpha n - 1 \choose m - 1}$ for all $l \in [n]$ and thus
\begin{align*}
    \textbf{RHS} \geq 
    \frac{{n-\alpha n \choose m}}{{n \choose m}} = \frac{{n - m \choose \alpha n}}{{n \choose \alpha n}} = \frac{{n - \frac{1}{\alpha} \choose \alpha n}}{{n \choose \alpha n}}.
\end{align*}

\textbf{Upper bound. } 
Notice that the marginal distribution of reject region
\begin{align*}
    \eta^*(A) = \begin{cases}
        \frac{1}{{n \choose \alpha n}}, &~ \text{if}~ |A| = \alpha n\\
        0, &~ \text{otherwise}
    \end{cases}.
\end{align*}
already guarantees Type I error $\leq \alpha$. 
It suffices to show \textbf{(*)}: for any $\rho \in \Delta(\Omega,\F)$, there exists a coupling $\cP_\rho$ of $\eta^*$ and $\rho$ such that $\mathbb{P}_{(x,A) \sim \cP_\rho}(x \notin A) \leq \frac{{n - \frac{1}{\alpha} \choose \alpha n}}{{n \choose \alpha n}} + \sum_{x: \rho(x) \geq \alpha} (\rho(x) - \alpha)$. 

Define $p$ as the projection from $\Omega \times 2^\Omega$ to $2^\Omega$, i.e. $p(V) = \{A \in 2^\Omega: \exists x \in \Omega, ~s.t. (x,A) \in V\}$. Let $W:= \{(x,A) \in \Omega \times 2^\Omega: x \in A\}$. To show the above, we check the Strassen's condition
\begin{align}\label{eq:strassen-cond}
    \rho(U) - \eta^*\left(p\left(W \cap (U \times 2^\Omega)\right)\right) \leq \frac{{n - \frac{1}{\alpha} \choose \alpha n}}{{n \choose \alpha n}} + \sum_{x: \rho(x) \geq \alpha} (\rho(x) - \alpha), &~ \forall U \subset \Omega.
\end{align}
Indeed, given Eq.~\eqref{eq:strassen-cond}, Theorem 11 in \citet{strassen1965existence} establishes \textbf{(*)}.

In the rest of the proof, we show Eq.~\eqref{eq:strassen-cond}. Fix $U$ with cardinality $k$. First notice that $\rho(U) - \sum_{x: \rho(x) \geq \alpha} (\rho(x) - \alpha) \leq (\alpha k \wedge 1) $. 
Since $p\left(W \cap (U \times 2^\Omega)\right) = \{A \in 2^\Omega: \exists i \in U, ~s.t.~ i \in A\}$, we have
\begin{align*}
    \eta^*\left(p\left(W \cap (U \times 2^\Omega)\right)\right) \geq 1 - \frac{{n - k \choose \alpha n}}{{n \choose \alpha n}} = 1 - \frac{{n - \alpha n \choose k}}{{n \choose k}}.
\end{align*}
If $k \leq \frac{1}{\alpha}$, then because $g(k) := \alpha k - 1 + \frac{{n - \alpha n \choose k}}{{n \choose k}}$ is convex and takes maximum $\frac{{n - \alpha n \choose \frac{1}{\alpha}}}{{n \choose \frac{1}{\alpha}}} = \frac{{n - \frac{1}{\alpha} \choose \alpha n}}{{n \choose \alpha n}}$ at $k^* = \frac{1}{\alpha}$, we have
\begin{align*}
    \rho(U) - \eta^*\left(p\left(W \cap (U \times 2^\Omega)\right)\right) \leq &~ \alpha k - 1 + \frac{{n - \alpha n \choose k}}{{n \choose k}} + \sum_{x: \rho(x) \geq \alpha} (\rho(x) - \alpha)\\
    = &~ \frac{{n - \frac{1}{\alpha} \choose \alpha n}}{{n \choose \alpha n}} + \sum_{x: \rho(x) \geq \alpha} (\rho(x) - \alpha).
\end{align*}
If $k \geq \frac{1}{\alpha}$, then since $\frac{{n - \alpha n \choose k}}{{n \choose k}} = \frac{{n - k \choose \alpha n}}{{n \choose \alpha n}}$ is monotonously decreasing in $k$,
\begin{align*}
    \rho(U) - \eta^*\left(p\left(W \cap (U \times 2^\Omega)\right)\right) \leq &~ \frac{{n - \alpha n \choose k}}{{n \choose k}} + \sum_{x: \rho(x) \geq \alpha} (\rho(x) - \alpha)\\
    = &~ \frac{{n - \frac{1}{\alpha} \choose \alpha n}}{{n \choose \alpha n}} + \sum_{x: \rho(x) \geq \alpha} (\rho(x) - \alpha).
\end{align*}
Combining, we establishes Eq.~\eqref{eq:strassen-cond}.

Under the condition $\alpha \to 0_+$ and $1/(\alpha n) \to 0_+$, the rate displayed in Theorem 1 simplifies to:
\begin{align*}
\frac{(n-\alpha n)(n - \alpha n - 1) \cdots (n-\alpha n - 1/\alpha +1)}{n ( n-1) \cdots (n - 1/\alpha + 1)} &\asymp (1-\alpha)^{1/\alpha} \to e^{-1}.
\end{align*}
This concludes the proof.
\end{proof}

\section{Proof of Theorem~\ref{thm:agnostic_level}}\label{sec:agnostic_level_proof}

\begin{proof}
The proof largely follows the proof of Theorem~\ref{thm:agnostic}. 
Notice that the marginal distribution of reject region
\begin{align*}
    \eta^*(A) = \begin{cases}
        \frac{1}{{n \choose \alpha n}}, &~ \text{if}~ |A| = \alpha n\\
        0, &~ \text{otherwise}
    \end{cases}.
\end{align*}
already guarantees Type I error $\leq \alpha$. 
In what remains, we define $p$ and $W$ in the same way in the proof of Theorem~\ref{thm:agnostic} and check the Strassen's condition
\begin{align}\label{eq:strassen-cond-level}
    \rho(U) - \eta^*\left(p\left(W \cap (U \times 2^\Omega)\right)\right) \leq \frac{{n - \frac{1}{\alpha} \choose \alpha n}}{{n \choose \alpha n}} + \sum_{x: \rho(x) \geq \alpha} (\rho(x) - \alpha), &~ \forall U \subset \Omega.
\end{align}

Fix $U$ with cardinality $k$. Due to the condition of $\sup_{\omega \in \Omega}\rho(\{\omega\}) \leq \kappa$, we have $\rho(U) - \sum_{x: \rho(x) \geq \alpha} (\rho(x) - \alpha) \leq (\kappa k \wedge 1) $. 
Since $p\left(W \cap (U \times 2^\Omega)\right) = \{A \in 2^\Omega: \exists i \in U, ~s.t.~ i \in A\}$, we have
\begin{align*}
    \eta^*\left(p\left(W \cap (U \times 2^\Omega)\right)\right) \geq 1 - \frac{{n - k \choose \alpha n}}{{n \choose \alpha n}} = 1 - \frac{{n - \alpha n \choose k}}{{n \choose k}}.
\end{align*}
If $k \leq \frac{1}{\kappa}$, then
\begin{align*}
    \rho(U) - \eta^*\left(p\left(W \cap (U \times 2^\Omega)\right)\right) \leq &~ \kappa k - 1 + \frac{{n - \alpha n \choose k}}{{n \choose k}} + \sum_{x: \rho(x) \geq \alpha} (\rho(x) - \alpha)\\
    = &~ \frac{{n - \frac{1}{\kappa} \choose \alpha n}}{{n \choose \alpha n}} + \sum_{x: \rho(x) \geq \alpha} (\rho(x) - \alpha).
\end{align*}
where the second step follows from the fact that $g(k) := \kappa k - 1 + \frac{{n - \alpha n \choose k}}{{n \choose k}}$ is convex and takes maximum $\frac{{n - \alpha n \choose \frac{1}{\kappa}}}{{n \choose \frac{1}{\kappa}}} = \frac{{n - \frac{1}{\kappa} \choose \alpha n}}{{n \choose \alpha n}}$ at $k^* = \frac{1}{\kappa}$.

If $k \geq \frac{1}{\kappa}$, then
\begin{align*}
    \rho(U) - \eta^*\left(p\left(W \cap (U \times 2^\Omega)\right)\right) \leq &~ \frac{{n - \alpha n \choose k}}{{n \choose k}} + \sum_{x: \rho(x) \geq \alpha} (\rho(x) - \alpha)\\
    = &~ \frac{{n - \frac{1}{\kappa} \choose \alpha n}}{{n \choose \alpha n}} + \sum_{x: \rho(x) \geq \alpha} (\rho(x) - \alpha).
\end{align*}
since $\frac{{n - \alpha n \choose k}}{{n \choose k}} = \frac{{n - k \choose \alpha n}}{{n \choose \alpha n}}$ is monotonously decreasing in $k$
Combining, we establishes Eq.~\eqref{eq:strassen-cond-level}.

Combining the above cases, we checked Strassen's condition and hence the statement follows.
\end{proof}

\section{Proof of \Cref{thm:robust_rates}}\label{sec:robust_rates_proof}

\begin{proof}
Throughout the proof we omit the subscript in the shrinkage operator $\cS$, as $G$ is fixed. 
First notice that 
\begin{align*}
    \E_{X,R \sim \cP}\left[\min_{Y \in out(X)}\mathbbm{1}(Y \in R)\right] = &~ \cP(X \in \cS(R))\\
    = &~ \sum_{y \in \Omega}\sum_{R \in 2^\Omega}\rho(y) \cP(R|y) \mathbbm{1}(y \in \cS(R)).
\end{align*}
Further, notice that $y \in in(z)$ and $y \in \cS(R)$ implies that $z \in R$, thus
\begin{align*}
    \sum_{y \in in(z)}\sum_{R \in 2^\Omega}\rho(y) \cP(R|y) \mathbbm{1}(y \in \cS(R)) \leq &~ \sum_{y \in in(z)}\sum_{R \in 2^\Omega}\rho(y) \cP(R|y)\mathbbm{1}(z \in R)\\
    \leq &~ \sum_{y \in \Omega}\sum_{R \in 2^\Omega}\rho(y) \cP(R|y)\mathbbm{1}(z \in R)\\
    = &~ \PP_{X \sim \delta_z, R \sim \cP(\Omega,\cdot)}(X \in R)\\
    \leq &~ \alpha.
\end{align*}
It follows that the optimum Type II error is lower bounded by the optimum of the following Linear Program
\begin{align}\label{eq:robust_necessary_lp}
    \min_{\cP} &~ 1-\sum_{y \in \Omega}\sum_{R \in 2^\Omega}\rho(y) \cP(R|y) \mathbbm{1}(y \in \cS(R))\\
    s.t. &~ \sum_{y \in in(z)}\sum_{R \in 2^\Omega}\rho(y) \cP(R|y) \mathbbm{1}(y \in \cS(R)) \leq \alpha, \sum_{R \in 2^\Omega}\cP(R|z) = 1, 0 \leq \cP(R|z) \leq 1, ~\forall z \in \Omega, R \in 2^\Omega.\notag
\end{align}
We claim that the minimum in Eq.~\eqref{eq:robust_necessary_lp} is equal to the minimum of Eq.~\eqref{eq:robust_lp}. 
Indeed, it suffices to show that Eq.~\eqref{eq:robust_necessary_lp} is optimized when $\cP(\cdot |y_0)$ is supported on $\left\{\emptyset, \cS^{-1}(\{y_0\})\right\}$ (then setting $x(y) \equiv \cP(\cS^{-1}(\{y\})|y)$ reduces Eq.~\eqref{eq:robust_necessary_lp} to Eq.~\eqref{eq:robust_lp}). To see this, consider any minimizer $\wt\cP$ such that there exists $y_0 \in \Omega$ and $R_0 \notin \left\{\emptyset, \cS^{-1}(\{y_0\})\right\}$, with $\wt\cP(R_0|y_0) > 0$. We will show that there exists $\bar \cP$ such that it achieves the no greater objective value, and satisfies $|\supp(\bar \cP(\cdot |y_0)) \cap \left\{\emptyset, \cS^{-1}(\{y_0\})\right\}^c| = |\supp(\wt \cP(\cdot |y_0))\cap \left\{\emptyset, \cS^{-1}(\{y_0\})\right\}^c|-1$ and $|\supp(\bar \cP(\cdot |y))| = |\supp(\wt \cP(\cdot |y))|$ for all other $y \in \Omega$. Iteratively applying this argument, we reduce $\supp(\wt \cP(\cdot |y))\cap \left\{\emptyset, \cS^{-1}(\{y\})\right\}^c$ to $\emptyset$ for any $y \in \Omega$ and thereby prove the claim.

Consider the following two cases.

\textbf{Case 1: $y_0 \notin \cS(R_0)$. } Then letting 
\begin{align*}
    \bar \cP(R|y) = \begin{cases}
    \wt\cP(R_0|y) + \wt\cP(R|y), &~ y = y_0, R = \emptyset\\
    0, &~ y = y_0, R = R_0 \\
    \wt\cP(R|y), &~ \text{o.w.},
\end{cases}
\end{align*}
we observe that 
\begin{align*}
    \sum_{y \in \Omega}\sum_{R \in 2^\Omega}\rho(y) \wt \cP(R|y) \mathbbm{1}(y \in \cS(R)) = \sum_{y \in \Omega}\sum_{R \in 2^\Omega}\rho(y) \bar \cP(R|y) \mathbbm{1}(y \in \cS(R))
\end{align*}
and $\bar \cP$ satisfies all the constraints in Eq.~\eqref{eq:robust_necessary_lp}. 
It is obvious from the construction of $\bar \cP$ that $|\supp(\bar \cP(\cdot |y_0)) \cap \left\{\emptyset, \cS^{-1}(\{y_0\})\right\}^c| = |\supp(\wt \cP(\cdot |y_0))\cap \left\{\emptyset, \cS^{-1}(\{y_0\})\right\}^c|-1$ and $|\supp(\bar \cP(\cdot |y))| = |\supp(\wt \cP(\cdot |y))|$ for all other $y \in \Omega$.

\textbf{Case 2: $y_0 \in \cS(R_0)$. } Then
letting 
\begin{align*}
    \bar \cP(R|y) = \begin{cases}
    \wt\cP(R_0|y) + \wt\cP(R|y), &~ y = y_0, R = \cS^{-1}(\{y_0\})\\
    0, &~ y = y_0, R = R_0 \\
    \wt\cP(R|y), &~ \text{o.w.}
\end{cases},
\end{align*}
we observe that 
\begin{align*}
    \sum_{y \in \Omega}\sum_{R \in 2^\Omega}\rho(y) \wt \cP(R|y) \mathbbm{1}(y \in \cS(R)) = \sum_{y \in \Omega}\sum_{R \in 2^\Omega}\rho(y) \bar \cP(R|y) \mathbbm{1}(y \in \cS(R))
\end{align*}
and $\bar \cP$ satisfies all the constraints in Eq.~\eqref{eq:robust_necessary_lp} due to $\mathbbm{1}(y \in \cS(R_0)) \geq \mathbbm{1}(y \in \cS(\{y_0\}))$ for any $y \in \Omega$. 
From the construction of $\bar \cP$, we know that $|\supp(\bar \cP(\cdot |y_0)) \cap \left\{\emptyset, \cS^{-1}(\{y_0\})\right\}^c| = |\supp(\wt \cP(\cdot |y_0))\cap \left\{\emptyset, \cS^{-1}(\{y_0\})\right\}^c|-1$ and $|\supp(\bar \cP(\cdot |y))| = |\supp(\wt \cP(\cdot |y))|$ for all other $y \in \Omega$.

Combining the above cases, we established our claim. 

Finally, letting $\cP^*(\cdot |y) = x^*(y) \cdot \delta_{\cS^{-1}(\{y\})}$ for all $y \in \omega$, where $x^*$ is the solution of Eq.~\eqref{eq:robust_lp}, achieves the optimum value in Eq.~\eqref{eq:robust_lp}.
\end{proof}

\end{document}